\newcommand{\from}{\leftarrow}
\newtheorem{prop}{Proposition}
\journal{Pattern Recognition}
\begin{document}

\begin{frontmatter}

%% Title, authors and addresses

%% use the tnoteref command within \title for footnotes;
%% use the tnotetext command for theassociated footnote;
%% use the fnref command within \author or \address for footnotes;
%% use the fntext command for theassociated footnote;
%% use the corref command within \author for corresponding author footnotes;
%% use the cortext command for theassociated footnote;
%% use the ead command for the email address,
%% and the form \ead[url] for the home page:
%% \title{Title\tnoteref{label1}}
%% \tnotetext[label1]{}
%% \author{Name\corref{cor1}\fnref{label2}}
%% \ead{email address}
%% \ead[url]{home page}
%% \fntext[label2]{}
%% \cortext[cor1]{}
%% \address{Address\fnref{label3}}
%% \fntext[label3]{}

\title{A probabilistic framework for handwritten text line segmentation}

%% use optional labels to link authors explicitly to addresses:
%% \author[label1,label2]{}
%% \address[label1]{}
%% \address[label2]{}

\author{Francisco~Cruz}
\ead{fcruz@cvc.uab.es}
\author{Oriol~Ramos~Terrades}
\ead{oriolrt@cvc.uab.es}

\address{Computer Vision Center-Dept. Ci\`encies de la Computaci\'o, Universitat Aut\`onoma de Barcelona, Edifici O, 08193, Bellaterra (Barcelona), Spain}

\begin{abstract}
We successfully combine Expectation-Maximization algorithm and variational approaches for parameter learning and computing inference on Markov random fields. This is a general method that can be applied to many computer vision tasks. In this paper, we apply it to handwritten text line segmentation. We conduct several experiments that demonstrate that our method deal with common issues of this task, such as complex document layout or non-latin scripts. The obtained results prove that our method achieve state-of-the-art performance on different benchmark datasets without any particular fine tuning step.
\end{abstract}

\begin{keyword}
%% keywords here, in the form: keyword \sep keyword
Document Analysis \sep Text Line Segmentation \sep EM algorithm \sep Probabilistic Graphical Models \sep Parameter Learning
%% PACS codes here, in the form: \PACS code \sep code

%% MSC codes here, in the form: \MSC code \sep code
%% or \MSC[2008] code \sep code (2000 is the default)
\end{keyword}

\end{frontmatter}

%% \linenumbers

%% main text
\section{Introduction}

The task of text line segmentation arises as a particular case of physical layout analysis where the entities to segment are text lines of a text region.
Its importance in the document analysis field relies on the fact that many other tasks, as word-spotting or handwritten recognition, depend on the text line segmentation results. 
The problem of detecting text lines was stated decades ago in the context of machine-printed text \cite{Nagy2000}. Since then, many methods have been proposed with remarkable results to the point of be considered as a solved problem for machine-printed text \cite{OGorman1993, Liang1999, Nagy1992, Plamondon2000}.
Printed text lines are expected to be uniform throughout the document, as well as to be free of line overlapping and warping effects. However, if these conditions are not satisfied, these methods can fail.

The segmentation of freestyle handwritten documents is still a challenging problem. The large variability in writing styles and possible document layouts generates a set of challenges to overcome.
First, text line orientation can vary along the document or within the same paragraph. Besides, it is also possible to find curved or broken text lines result of the writer style. 
Second, text lines can overlap with each other. This is produced by the contact between ascenders and descenders of characters or just because of cramped text. This effect is a problem for many methods, which expect certain separation between lines.
Third, and regarding the document layout, text can be located in any part of the document. For instance, text in letters is usually located at the center of the document. However, handwritten annotations in administrative documents text is located randomly at any document location.
Many of the methods, which has recently been proposed in the last years, focus on particular kind of document collections. Other methods focus on specific problems, such as touching lines or curved lines and others are tailored to particular scripts or document layouts that make them hard to generalise to other collections \cite{Sulem2007}.

Statistical approaches are less commonly applied for this task and they are often limited to model local features or for post-processing tasks. Markov Random Fields (MRF) have proved to be a good choice for many computer vision tasks, since they provide a strong statistical framework to model prior information about the problem and the relationships between the set of variables \cite{Wang2013}. 
However, inference and parameter learning are intractable for certain model topologies with a large number of variables and high-order relationships. In these cases approximate methods are required to efficiently learn model parameters and perform inference tasks~\cite{Komodakis2015,Liu2015,Schwing2016}.

In this paper we propose a general method for handwritten text line segmentation based on the estimation of a set of regression lines. We successfully combine Expectation-Maximization (EM) algorithm and variational approaches for parameter learning and inference on the model. Thus, we summarize the main contributions of this paper as follows: 

\begin{enumerate}
\item It is a general method devised to be script, layout, and language independent. Besides, it can be applied on documents with complex  layouts. %and it is easily adaptable to specific features.
\item It can easily extended with any prior knowledge of the task by the inclusion of new feature functions. 
\item It performs parameter learning in an algorithm that combines MRF parameter learning within an EM process.
\end{enumerate}

The rest of the paper is organized as follows: 
In Section 2 we review some of the the main works and techniques proposed for the handwritten line segmentation task. 
In Section 3 we describe the proposed model and learning algorithm. 
In Section 4 we describe the initialization and post-process steps. 
In Section 5 we describe an exhaustive evaluation and the obtained results. Finally, in Section 6 we present the conclusions of this work.

\section{Related Work}   %Revision of SoTA

In the last years there have been many attempts to tackle the task of text line segmentation from different perspectives. The variety of methods promoted the celebration of several contests and benchmark datasets \cite{Gatos2007, Gatos2009, Gatos2013}. As a particular case of physical layout analysis, common approaches are based on the bottom-up and top-down paradigms. However, hybrid approaches have emerged using a wide range of techniques.

Bottom-up approaches are based on the analysis at pixel level or at connected component level. These methods group pixel, or CC, first into characters, then into words and ultimately, to lines. These methods usually obtain good results when exists a clear separation between lines and characters [19,20,21]. However, in conditions of crowded text it may result in text line overlapping. In some cases, these methods are complemented with a post-process step where the overlapping is detected and treated apart \cite{Li2008}.
Different works usually differ in the grouping mechanism. Geometric relationships as distance, angle, or similarity are common criteria \cite{Simon1997,Jaeger2006}. Clustering methods \cite{Yin2009}, or the optimization of a fitting function \cite{Koo2012} have been also proposed. In \cite{Li2008} the level set method is used in combination with a probabilistic function to find line boundaries. 

Top-down approaches analyze top level entities as text blocks, and split them into lines and words, consecutively. 
Projection profile-based methods are the most representative of this type \cite{Nagy1992}. The idea is to project text pixels on the vertical axis and analyze the resulting histogram. Maximum and minimum peaks shall represent, in an ideal case, the location of the text lines and line spacing, respectively \cite{Manmatha2005}. 
The sensitivity to orientation changes or curved lines is usually tackled dividing the document in vertical strips and process separately \cite{Bruzzone1999}. The results on each of the strips are then aligned by means of geometrical properties \cite{kavallieratou2002, Pal2004}, or probabilistic features \cite{Arivazhagan07,Papavassiliou2010}.
In addition, it is common to use common top-down approaches to find an initial text line location, and then run another more sophisticated method to find them \cite{Shi2009}. 
These methods usually fail on freestyle handwritten documents where text is randomly spread over the whole document, or text lines have a high overlapping degree or curvature.

Hybrid methods combine bottom-up and top-down methodologies with other techniques.
The Hough Transform is used to locate text lines by extracting a set of key points of the image and computing the lines that best fit these set of points. These lines are then combined according to different criteria as contextual information \cite{Fletcher1988} or an exhaustive search approach \cite{Likforman1995}. 
In general, Hough-based methods are highly affected by touching text lines and crowed text \cite{Louloudis2008, Pu1998, Shi2009}.
Morphology-based operators have also produced good results \cite{roy2008, Nicolaou2009, Alaei2011}. These methods analyze morphological properties of the documents to infer text line location. The run-length smearing algorithm (RLSA), is a representative example of this approach \cite{Wong1982, Shi2004}. These methods obtain good results on skewed and curved lines. However, touching text lines still affects negatively to the performance.
Graph-based approaches, where lines are represented by minimum cost paths, and active contours (snakes) are other examples of methodologies applied \cite{Fernandez2014, Kumar2011, Liwicki2007, Bukhari2009, Bukhari2009b, Bukhari2013}.

The use of probabilistic graphical models have been mainly focused in the task of document segmentation and text extraction \cite{Nicolas2007}. There, a MRF is defined according to the grid-like structure of the pixels considering pairwise relationships between neighbors. The main challenge relies on the inference process. The computation of exact inference is an NP-hard problem in general, and it becomes intractable for most of loopy MRF configurations. Approximate algorithms as belief propagation \cite{Pearl1982} and its extensions like the Generalized Belief Propogation (GBP) have been widely used for many segmentation tasks. However, these algorithms do not always guarantee to converge. Variational methods based on the minimization of different kind of convex  free energies~\cite{Heskes2006} provide convergent extensions of the GBP algorithm~\cite{Yedidia05}. However, the convergence rate of these methods is still low and can not be applied, in practice to models with high-order cliques. Some approaches take advantage of distributed architectures to speed up learning and inference tasks~\cite{Schwing2016}. More recently, it has increased relevance weighted mini-bucket (WMB) methods as a trade-off between inference accuracy and time complexity~\cite{Dechter2003,Liu2011,Flerova2016}. Hybrids methods, which combines sampling-based methods like importance sampling (IS) and variational methods has also been developed to increase both the accuracy and the efficiency of both inference and parameter learning \cite{Liu2015}. However, there are still room for improvement in both inference and learning methods for MRF models.

\section{Model}
\label{sec:model}

In this section we describe the model proposed for the task of handwritten text line segmentation. For a given text line, our hypothesis is that, if we know the set of pixels that compose it, we can estimate a regression line through these pixels that is a good estimate of the original line position. Besides, each of these pixels will have a higher probability to be assigned to this line than to another.

We select a random set of $N$ text pixels ensuring an uniform distribution along the document image in order to cover all the textual components. The use of a random sample reduce the complexity of the overall method, and according to previous works it does not significantly affects to the final result as long as the sample covers all the data~\cite{Cruz2013}. 

We define a MRF model composed of two kind of random variables. On the one hand, we have random variables $e=(x,y)$ which correspond to pixel coordinates and, on the other hand, we have hidden variables, $h$, which denote the labels of text lines. The topology of our model is given by the Delaunay triangulation computed from the set of random pixels, as we show in~\figurename~\ref{fig:crfwim}. The result is an undirected graph $\mathcal{G} = (\mathcal{V},\mathcal{E})$ where vertexes in $\mathcal{V}$ are the variables $h$ and $e$. The set $\mathcal{E}$ is composed of two kind of edges. First, we have edges between pixel coordinates $e$ and the corresponding text line label. Second, we have edges between adjacent hidden variables $h$.

\begin{figure}[ht]
\centering
\begin{subfigure}{.45\textwidth}
  \centering
  \includegraphics[width=.7\linewidth]{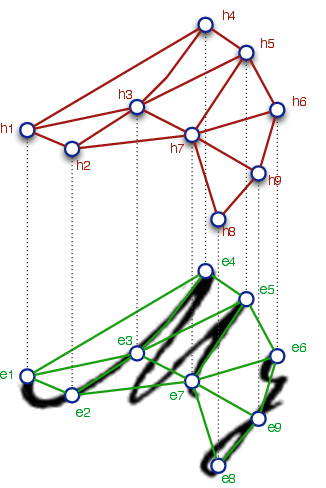}
  \caption{Undirected graph: $\mathcal{G}=(\mathcal{V},\mathcal{E})$}
  \label{fig:crfwim}
\end{subfigure}
\begin{subfigure}{.45\textwidth}
  \centering
  \includegraphics[width=.6\linewidth]{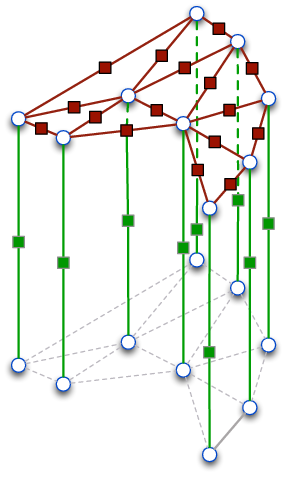}
  \caption{Factor graph}
  \label{fig:crffactors}
\end{subfigure}
\caption{Illustration of a region of the proposed MRF. (a) Variables in green represent the {\em observed} pixels, $e$. In red, hidden variables $h$ representing the text line labels. (b) Illustration of the two types of factors. Green factors are the $v$ factors that relates the observed and the hidden values. Red factors are the $u$ factors composed only by the hidden values.}
\label{fig:CRF}
\end{figure}

We represent our MRF model by a factor graph composed of two type of factor functions in agreement with the two kind of edges describe above, see~\figurename~\ref{fig:crffactors}. 
First, we have factor functions modeling dependencies between observed pixels, $e$, and hidden variables, $h$. These are 3-order factors since pixel coordinates are two random variables and we denote them by $\Psi_v$, with $v\in[1,\,N]$. %Herein $v$ is an index that runs over the number of pixels. 
Second, we have factor functions modeling dependencies between pairs of hidden variables and we denote them by $\Psi_u$, where $u=\{i,j\}$ runs over the edges of the Delaunay triangulation. Thus, the MRF factorizes as a product of $\Psi_u$ and $\Psi_v$ as follows: %
\begin{equation}
\label{eq:ProbDistr}
p(e,h|\Theta) = \frac{1}{Z(\Theta)} \prod_{v} \Psi_{v} (e_{v},h_{v}|\Theta_a) \prod_{u} \Psi_{u}(h_{u}|\Theta_b)  = \prod_v p_v(e_v|h_v,\Theta_a) p(h|\Theta_b)
\end{equation}

\noindent where $\Theta=( \Theta_a,\Theta_b)$ is the set of shared parameters, i.e. all factors $\Psi_v$ share the same parameters $\Theta_a$, and similarly, all factors $\Psi_u$ share parameters $\Theta_b$.  Note that the topology of $\mathcal{G}$ allow us to factorizes the MRF model as a product of conditional likelihood probabilities $p_v(e_v|h_v,\Theta_a)$ of pixels $e_v = (x_v,y_v)$  and the {\em prior} probability of hidden variables $h$, $p(h|\Theta_b)$. %Given 

Our method relies on the classic EM algorithm~\cite{Dempster1977}. This algorithm is based on the definition of a function $Q$, which is the conditional expectation of the likelihood function of a probability density function:%
\begin{equation}
\label{EqQ}
Q(\Theta|\Theta') = \mathrm{E}_{h}(\log p(h,e|\Theta)|e,\Theta')
\end{equation}

\noindent thus, in the Expectation (E) step, $Q$ is evaluated given the current set of parameters $\Theta'$. Then, in the Maximization (M) step, new parameters $\Theta$ are computed. These new parameters are obtained by computing the partial derivatives of $Q$ with respect to each single model parameter $\theta_k \in \Theta$. This scheme is repeated until both sets of parameters: $\Theta'$ and $\Theta$ are equal.

Our method essentially follows the same scheme. The main difference concerns the parameter learning step of the MRF model. First, in the E-step, we update the parameters of the {\em prior} probability $p(h|\Theta_b)$. We update these parameters using the proposed extension of the GBP, which we explain in section~\ref{sec:GMP}, to allow parameter learning. 
With the parameters learned we can approximate the posterior probability of each single hidden variable $h_v$ given the coordinates $e_v$. Then, in the M-step, we update the parameters $\Theta_a$, which correspond to the regression lines. In summary, our proposed scheme is Algorithm~\ref{alg:EM}:

\begin{algorithm}
\begin{enumerate}
\item $\Theta=(\Theta_a,\Theta_b)$ initialization
\item E-step: parameter learning of {\em prior} probability
	\begin{enumerate}
	\item Update $\Theta_b$: $\Theta_b \leftarrow \Theta_b'$
	\item Estimate $p(h_v|e_v,\Theta_a',\Theta_b)$
	\end{enumerate}
\item M-step: estimation of regression lines
\begin{enumerate}
\item Update $\Theta_a$: $\Theta_a \leftarrow \Theta_a'$
\end{enumerate} 
\item Repeat steps 2-3 until convergence
\item End
\end{enumerate}
\caption{EM algorithm for MRF models}
\label{alg:EM}
\end{algorithm}

In the remainder of this section we explain the linear regression scheme and how to estimate the new updates of its parameters $\Theta_a$. Then we explain how to learn model parameters linked to the prior probability $p(h|\Theta_b)$. We will conclude this section with the definition of the feature functions used for the handwritten text line segmentation task.

\subsection{EM algorithm for linear regression}
\label{sec:EM}

We defined a set of factor functions that encode the information within the MRF. Each factor function is composed of a set of feature functions $f_k$ and $g_k$ where $k$ runs in $I_{u}$ or $I_{v}$ depending whether the feature function is defined on $\Psi_u$ or $\Psi_v$, respectively. These feature functions are embedded in factors as: % 
\begin{equation}
\label{ff}
\begin{split}
\log \Psi_{u} &= \sum_{k \in I_{u}}  f_k(h_{u} | \Theta_b) \\
\log \Psi_{v} &= \sum_{k \in I_{v}}  g_k(h_{v},e_{v}| \Theta_a)
\end{split}
\end{equation}
\noindent we replace the above definitions and the MRF model of Eq.~\eqref{eq:ProbDistr} in $Q$ and we have:%
\begin{equation}
\label{EqQ2}
\begin{split}
Q(\Theta | \Theta') = \sum_v   \sum_{h_v}  \left[\sum_{k \in I_v}g_k(h_v,e_v | \Theta_a)  -  \log Z_v(h_v,\Theta_a)\right]p_v(h_v|e_v,\Theta_a',\Theta_b')  + \\
+  \sum_u \sum_{k \in I_u} \left[ \sum_{h_u} f_k(h_u | \Theta_b) p_u(h_u|\Theta_b') \right] - \log Z_0(\Theta_b)
\end{split}
\end{equation}

\noindent  where $Z_v(h_v,\Theta_a)$ and  $Z_0(\Theta_b)$ denote, respectively, the partition function of  the conditional likelihood probabilities and the {\em prior} probability. With this expression we find the new parameter updates by finding the local maximum of $Q$, which correspond with the M-step. 

\begin{figure}[t]
\begin{center}

    \includegraphics[width=0.4\paperwidth, height=150pt]{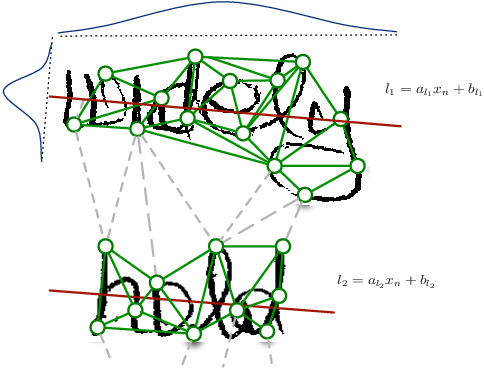}
	\caption{Hypothetical region of our graphical model that relates the pixels from words from consecutive lines. Messages sent through the dashed lines are supposed to favor a different label for each connected pixel. }
	\label{fig:crflines}
\end{center}
\end{figure}

We use a linear regression model to fit the text lines in the document. The goal is to estimate a set of $L$ lines in the form $y_v = a_l x_v + b_l$ with vertical variance $\sigma^2_{l,t}$ from the set of pixels that compose it. Besides, in order to fit the size of text lines we define a pair of bounds that defines a segment of $l$. These bounds are given with respect to the center of the segment $c_l$ by the horizontal variance $\sigma^2_{l,s}$. Therefore, a line $l$ is defined by the following five parameters: $\theta_a = \{a_l,b_l,c_l,\sigma_{l,t},\sigma_{l,s}\}$ that define two Gaussian density functions linked to the horizontal and vertical variances. The associated likelihood probabilities are: %
\begin{equation}\begin{split}
p_t(x_v,y_v | h_v=l,\theta_a) &\propto \exp \left\{ -\frac{(y_v - a_l x_v - b_l)^2}{2 \sigma_{l,t}^2}  \right\} \\
p_s(x_v,y_v | h_v=l,\theta_a) &\propto \exp \left\{ -\frac{(x_v - c_l)^2}{2 \sigma_{l,s}^2}  \right\}
\end{split}\label{Gauss1}
\end{equation}

\noindent for a pixel $e_v=(x_v,y_v)$ and a line $l$. These densities will provide a measure of how well a particular pixel fits a line. \figurename~\ref{fig:crflines} shows an example of a MRF region with two regression lines across two hypothetical words from consecutive text lines $l_1$ and $l_2$. Vertical Gaussian function results perpendicular to the regression line since its purpose is to account for line residues. Horizontal Gaussian in return is defined parallel to the x-axis, since it only controls the line length.

The update equations for each parameter are found by computing the partial derivatives with respect to each parameter of Eq.~\eqref{EqQ2}. The update expressions for $\Theta_a$ are similar than in our previous work \cite{Cruz2013}, although in this case the posterior $p_v(h_v=l|e_v,\Theta_a',\Theta_b)$ is given by the inference algorithm explained later in section~\ref{sec:GMP}. We provide all details of their derivation in the supplementary material of this paper. 
For a given document the number of parameters to estimate is $|\Theta_a| = 5L $. Note that only parameters $\sigma^2_{l,t}$ and $\sigma^2_{l,s}$ appear on the partition function $\log Z_v(h_v,\Theta_a)$: 

\begin{equation}\begin{split}
a_l^{new} &= \frac{\sum_v (x_v - \bar{x}) (y_v - \bar{y}) p_v(h_v=l|e_v,\Theta_a',\Theta_b)}  {\sum_v (x_v - \bar{x})^2 p_v(h_v=l|e_v,\Theta_a',\Theta_b)} \\
b_l^{new} &= \frac{\sum_{v} (y_v - a_l^{new} x_v) p_v(h_v=l|e_v,\Theta_a',\Theta_b)}  {\sum_{v} p_v(h_v=l|e_v,\Theta_a',\Theta_b)} \\
c_l^{new} &= \frac{\sum_{v} x_v p_v(h_v=l|e_v,\Theta_a',\Theta_b)}  {\sum_{v} p_v(h_v=l|e_v,\Theta_a',\Theta_b)} \\
\sigma^2_{l,t} &= \frac{\sum_{v} (y_v - a_l^{new} x_v -b_l^{new})^2 p_v(h_v=l|e_v,\Theta_a',\Theta_b)}  {\sum_{v} p_v(h_v=l|e_v,\Theta_a',\Theta_b)} \\
\sigma^2_{l,s} &= \frac{\sum_{v} (x_v - c_l^{new})^2 p_v(h_v=l|e_v,\Theta_a',\Theta_b)}  {\sum_{v} p_v(h_v=l|e_v,\Theta_a',\Theta_b)} 
\end{split}\label{update1}
\end{equation}

In addition, we also estimate the \textit{prior} probability of each line $l$ given the updated parameters $\Theta_a$ as: %
\begin{equation}
\label{eq:pline}
p_v(h_v=l|\Theta_a,\Theta_b) = \frac{1}{N}{\sum_v p_v(h_v=l|e_v,\Theta_a,\Theta_b)}
\end{equation}

The key point is that posterior probabilities $p_v(h_v=l|e_v,\Theta_a',\Theta_b)$ are unknown and consequently we cannot update the parameters of the regression lines. To overcome this problem, we run an approximate inference algorithm  that allow us to learn MRF parameters and estimate $p_v(h_v=l|e_v,\Theta_a',\Theta_b)$.

\subsection{Inference and Learning} % of prior $p(h|\Theta_b)$}
\label{sec:GMP}

In the previous section, we described how to estimate the parameters $\Theta_a$ linked to regression lines. However, parameters $\Theta_b$ remain unknown and still have to be learned. 
Many parameter learning methods for MRF models relay on free energy methods. These are variational methods that seek density functions that approximate true marginals by beliefs functions that satisfies a set of constraints. Free energies are quite close to $Q(\Theta,\Theta')$ used within the EM algorithm and defined in Eq.~\eqref{EqQ2}, since both are defined in terms of the Kullback-Leibler divergence (KLD). For instance, the free energy associated to Belief Propagation (BP) algorithm is the Bethe energy as: %
\begin{equation}\begin{split}
F_{Bethe}( p(h|e,\Theta_a',\Theta_b) ) &= \sum_u p_u(h_u|\Theta_b)\log p_u(h_u|\Theta_b) + \\
&+ \sum_{v} n_v p_v(h_v|e_v,\Theta_a',\Theta_b) \log p_v(h_v|e_v,\Theta_a',\Theta_b)
\end{split}
\end{equation}

\noindent where $n_v$ are related to the number of neighbors of $h_v$, and can be negative. In our case, we have to include the information given by the likelihood functions of regression lines. So, we define the free energy as: %
\begin{equation}\begin{split}
F( p(h|e,\Theta_a',\Theta_b) ) &= \sum_u p_u(h_u|\Theta_b)\log p_u(h_u|\Theta_b) + \\
&+ \sum_{v} c_v p_v(h_v|e_v,\Theta_a',\Theta_b) \log p_v(h_v|e_v,\Theta_a',\Theta_b) + \\
&+ \sum_v\sum_{k\in I_v}\sum_{h_v} g_{k}(h_v,e_v|\Theta_a') p_v(h_v|e_v,\Theta_a',\Theta_b)
\end{split}\label{eq:FreeEnergy2}
\end{equation}

\noindent where parameters $c_v>0$ are any positive real value. The approximate marginals and conditional marginals have to satisfy the usual constraints used in message-passing methods. We summarize them in Table~\ref{tab:constraint}. 
First, since $p_u$ and $p_v$ are marginal approximations, they have to be {\em normalized}. 
Second, we have to impose the {\em sum-normalization} constraint between $p_u(h_u|\Theta_b)$ and $p_v(h_v|e_v,\Theta_a',\Theta_b)$ to ensure consistency between marginal estimation. 
Unlike usual message-passing algorithms and to well tie the estimated prior probabilities by the model with the observed data, we impose consistency between prior probability of single variables $h_v$, $p_u(h_v|\Theta_b)$, and posterior probability $p_v(h_v|e_v,\Theta_a',\Theta_b)$. 
Finally, we have to ensure coherence between the observations, encoded in the empirical moments $\mu_k$, and model prediction. 
This last set of constraints is the called {\em moment-matching} constraint and it provides the parameter learning step for the pairwise parameters and global prior probability, Eq.~\eqref{eq:pline}. 
Thus, the minimization of Eq.~\eqref{eq:FreeEnergy2} results on a constrained minimization problem that can be solved by means of Lagrange multipliers.

\begin{table}
\begin{center}
\begin{tabular}{c|c|c}
\hline
Constraint &  Formula & L. Multiplier \\
\hline \hline
\textit{normalization}        & $\sum_{h_u} p_u(h_u|\Theta_b) = 1$ & $\nu_u$ \\
 & $\sum_{h_v} p_v(h_v|e_v,\Theta_a',\Theta_b) = 1$ &  $\nu_v$ \\
\textit{sum-normalization}    &$ \sum_{h_{u \setminus v }} p_u( h_{u}|\Theta_b) = p_v(h_v|e_v,\Theta_a',\Theta_b)$ & $\lambda$\\
\textit{moment-matching}      & $\sum_u f_k(h_u) p_u(h_u|\Theta_b) = \mu_k$ & $\theta_k$  \\
\hline
\end{tabular}
\end{center}
\caption{Set of constraints and its corresponding Lagrange multipliers for the optimization problem of Eq.~\eqref{eq:FreeEnergy2}.}
\label{tab:constraint}
\end{table} 

Algorithm~\ref{alg:MP} is the numerical implementation of block gradient descend method applied to the dual problem obtained from the previous minimization problem.
We provide details of this algorithm in the supplementary material of this paper. Basically, the partial derivative with respect to $\theta_k$ provides the parameter learning equation according to the Armijo conditions. The partial derivative with respect to $\lambda_{v \rightarrow u}(h_v)$ lead to the usual message-passing equations. 
%The algorithm will converge after a fixed number of iterations, or according to the Kullback-Leibler divergence (KLD) between two consecutive estimation of $p_v(h_v|e_v)$. 
After convergence of the algorithm, we are able to get the final value of $p_v(h_v|e_v,\Theta_a',\Theta_b)$ required for the estimation of the new parameters $\Theta_a$.

\begin{algorithm}
\KwData{ $\{ \mu_k\}$: empirical moments.}
\KwResult{$\Theta_b$, $\Lambda$: model parameters, $\{p_v(h_v|e_v,\Theta_a',\Theta_b), p_u(h_u|\Theta_b)\}$ marginals.}
Initialize: $\theta_k= 0$, $\theta_k\in\Theta_b$, $m_{v \to u}(h_v)=1$;\\
\While{ not converged}
{

\For{ $\forall k \in \{I_u\}$ }{
$$\theta_k \leftarrow \theta_k' + \eta \left(\sum_{h_u} f_k(h_u) p_u(h_u|\Theta_b') - \mu_{k} \right )$$
}

\For{ $\forall v$  }
{
\For{ $\forall u \supset v$  }
{
$$ p_u(h_v|\Theta_b)=\sum_{h_{u \setminus v}}  p_u(h_u|\Theta_b) $$
$$m_{v \from u}(h_u)=\frac{p_u(h_v|\Theta_b)}{m_{v \to u}(h_v)}$$
%$$ p_{\beta}(h_{\beta}|e_{\beta})=\sum_{h_{\alpha\setminus \beta}}  p_{\alpha}(h_{\alpha}) $$
%$$m_{\beta \from \alpha}(h_{\alpha})=\frac{p_{\alpha}(h_{\alpha})}{m_{\beta \to \alpha}(h_{\beta},e_{\beta})}$$
}

$$p_v(h_v|e_v,\Theta_a',\Theta_b) = \frac{1}{Z_v} \left(e^{\sum_k g_k(h_v,e_v|\Theta_a') }\prod_{u \supset v} m_{v \from u}(h_u) \right )^{\!\!\frac{1}{c_v+A_v}}$$

\For{ $\forall u \supset v$  }
{
$$m_{v \to u}(h_v)   =  \frac{p_v(h_v|e_v,\Theta_a',\Theta_b)}{m_{v \from u}(h_u)}$$

$$p_u(h_u|\Theta_b) = \frac{1}{Z_u}\left(e^{ -\sum_k \theta_kf_k(h_u) }\prod_{v \subset u} m_{v \to u}(h_v)\right)^{\!\!\frac{1}{c_v}}$$
}
}
}
\caption{Message passing algorithm for constrained minimization of free energies problem in Eq.~\eqref{eq:FreeEnergy2}. $Z_{u}$ and $Z_{v}$ are the partition function and $\eta$ is a step length satisfying the Armijo condition.}
\label{alg:MP}
\end{algorithm}

\subsection{Feature functions}
\label{sec:featurefunctions}

In previous sections we defined a general pairwise MRF model adapted to the detection of an unknown number of text lines. This model allow a wide range of unary feature functions to estimate text line position and pairwise feature functions to model text line labels between adjacent pixels. Now we describe the set of feature functions $f_k$ and $g_k$ defined in \eqref{ff} used for the task of handwritten text line segmentation.

\paragraph{\textbf{Local fitting}}

This function uses the information provided by the two Gaussian distributions defined in Eq.~\eqref{Gauss1} with a slight modification inspired by \cite{Koo2012}. It corresponds to a flattened Gaussian distribution on its maximum value. The width of this Gaussian plateau is controlled by a threshold $S_l$, which is computed as in \cite{Koo2012} and it estimates the interline space above and below line $l$. In summary, we define this function as:

\begin{equation}  
g_k(h_v = l, e_v|\Theta_a) \triangleq \left\{ \begin{array}{cclc}
      - \frac{(x - c_l)^2}{2 \sigma_{l,s}^2}   & &  if  & d_l \leq r S_l \\
      - \frac{(y - a_l x - b_l)^2}{2 \sigma_{l,t}^2}   -  \frac{(x - c_l)^2}{2 \sigma_{l,s}^2}   &  &  if  & d_l > r S_l
             \end{array}
   \right.
\label{ff2}
\end{equation}	

\noindent where $d_l$ is the residue of the regression line, and $r \in [0,\,1]$. This flattened procedure slightly modifies the computation of the partition function of likelihood probabilities $p_v(e_v|h_v,\Theta_a',\Theta_b)$ but it still depend only on the variances $\sigma^2$, and therefore, the update equations in Eq.~\eqref{update1} remain valid.

\paragraph{\textbf{Line probability}}

This function integrates the {\em prior} probability computed in Eq.~\eqref{eq:pline} into the learning process described in Algorithm~\ref{alg:MP}. This prior probability can be seen as a moment of the indicator function $[h_v=l]$ and consequently we can learn its associated parameter $\theta_l$. We update the corresponding empirical moment $\mu_k$ with the line probability estimated in each iteration. In our case, for each line the empirical moment is $\mu_l=p_v(h_v=l|\Theta_a,\Theta_b)$. Thus, the function is defined as:  
%
%This function includes the probability of each line within the information processed by the MRF. With this function we expect to avoid to assign variables to surplus lines, and reinforce the regression lines with higher probabilities. This function is defined as:
%
\begin{equation}
f_k(h_v=l | \Theta_b) \triangleq \theta_l [h_v = l]
\label{ff3}
\end{equation}

%\noindent where $\theta_l$ is the parameter for a given line $l$ estimated according to its \textit{prior} probability computed in Eq.~\eqref{pline}.

With this function we expect to avoid to assign variables to surplus lines, and reinforce the regression lines with higher probabilities.

\paragraph{\textbf{Pairwise function}}

Pairwise functions encode the probability of assigning a set of labels to neighbor variables. 
In our task we encode in this function some assumptions about the configuration of the lines. For example, in a given document two connected variables are more likely to belong to the same text line, i.e. share the same label, or as much, to consecutive lines. Besides, some documents may have two connected variables from non-consecutive text lines, although they represent a few cases with respect to the most common layouts.
We define our pairwise function according to those three possible scenarios. 
The function is defined on $\Psi_u$, and returns the parameter associated to each possible case:

\begin{equation}
f_k(h_i,h_j|\Theta_b) \triangleq \left\{ \begin{array}{llc}
              \theta_0 &  if  & |h_{i} - h_{j}| = 0 \\
              \theta_1 &  if  & |h_{i} - h_{j}| = 1 \\
              \theta_2 &  if  & |h_{i} - h_{j}| \geq 2
             \end{array}
\right.
\label{ff1}
\end{equation}

\noindent where ${\theta_0,\theta_1, \theta_2}$ are parameters in $\Theta_b$ shared for all pair of hidden variables in $u$ and learned with Algorithm~\ref{alg:MP}. 
The empirical moments $\mu_k$ for this function are learned from the training set by analyzing the frequency of each considered case.

In summary, we have 5L parameters to estimate during the M-Step, and 3+L parameters in $\Theta_b$ to learn.

\section{Initialization and final labeling}%Text line segmentation on handwritten documents}

In this section we describe the steps required to configure our method for the task of handwritten text line segmentation. 
First, we define the initialization step which is crucial for the good performance of the EM algorithm. Second, we describe the post-process and final labeling.

\subsection{Initialization}
\label{sec:Initialization}

The initialization of our method for handwritten line segmentation consist of two steps. In the first place we detect the different text regions that compose the document image. Then, for each of them we initialize the parameters of the regression lines.

\paragraph{\textbf{Text region segmentation}} 
Our region segmentation process is based on the segmentation method from \cite{Xiao2003}. 
According to the Delaunay triangulation that defined the MRF structure, we analyze the length of the sides of the triangles in order to find a threshold dependent on the image that identifies the ones that are connecting different regions. Once computed, we remove the ones which longest side is above this value. In this way the different regions are isolated. More details of this process and the computation of the threshold can be found in the referenced paper. This step provides flexibility to our method, since it is able to work in documents with complex layouts by dividing the problem in smaller and simpler ones.
An example of this process is shown in \figurename~\ref{fig:Delaunay}. 

\begin{figure}[t]
\centering \begin{subfigure}{.33\textwidth}
  \centering
  \includegraphics[width=1\linewidth]{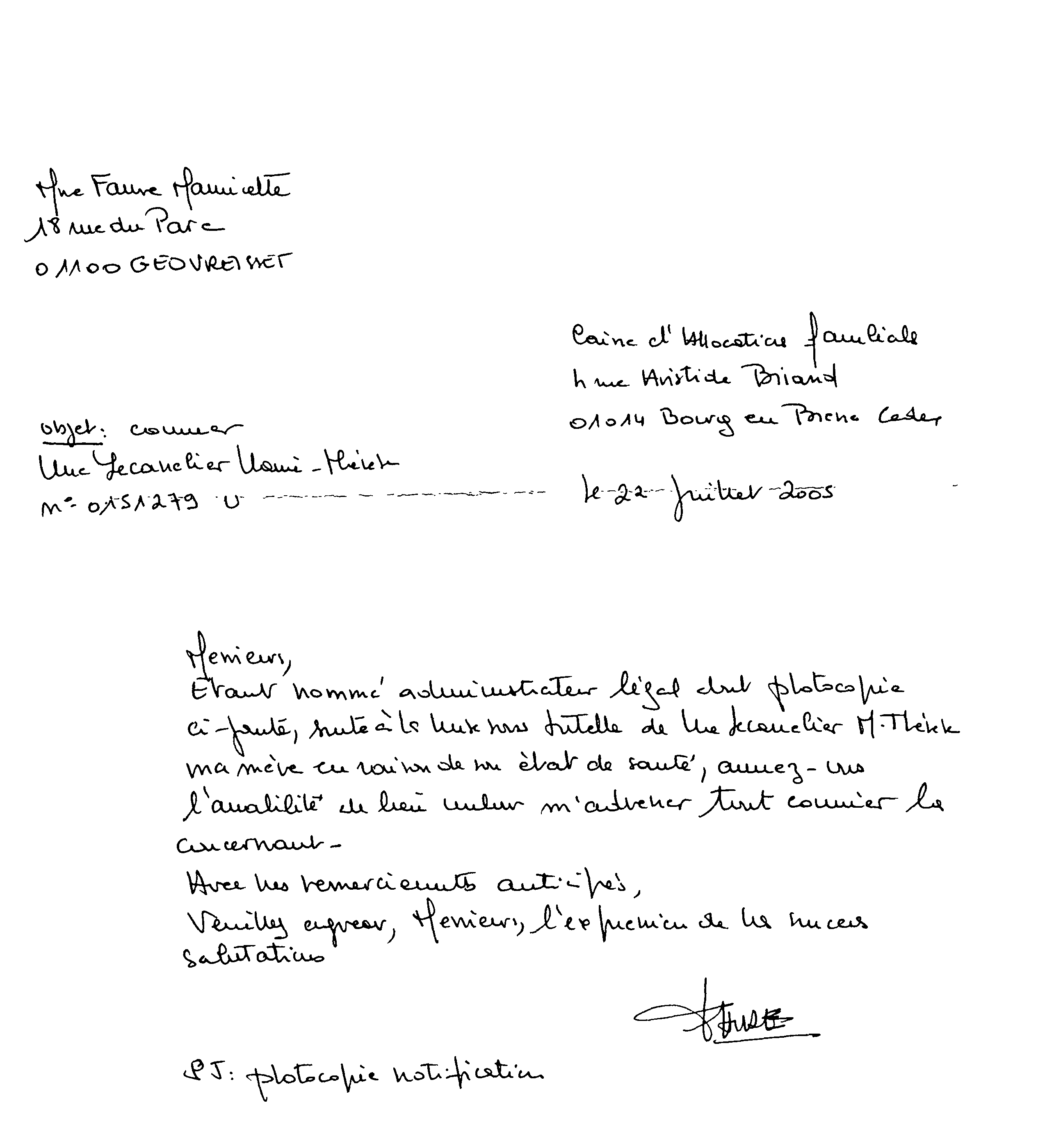}
  \caption{}
\end{subfigure}\begin{subfigure}{.33\textwidth}
  \centering
  \includegraphics[width=1\linewidth]{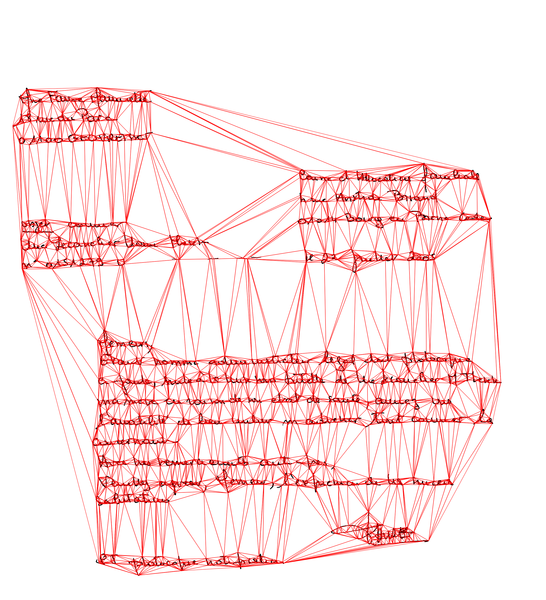}
  \caption{}
\end{subfigure}\begin{subfigure}{.33\textwidth}
  \centering
  \includegraphics[width=1\linewidth]{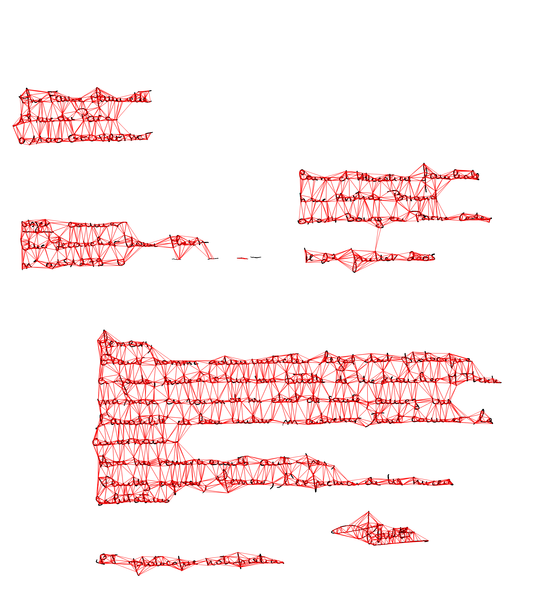}
  \caption{}
\end{subfigure}
\caption{Illustration of the text region segmentation process: a) Original image. b) Delaunay triangulation computed on the set of selected random pixels. c) Result of the process after removing the selected triangles isolating several text regions.}
\label{fig:Delaunay}
\end{figure}

\begin{figure}[t]
\centering
\begin{subfigure}{.45\textwidth}
  \centering
  \includegraphics[width=.8\linewidth]{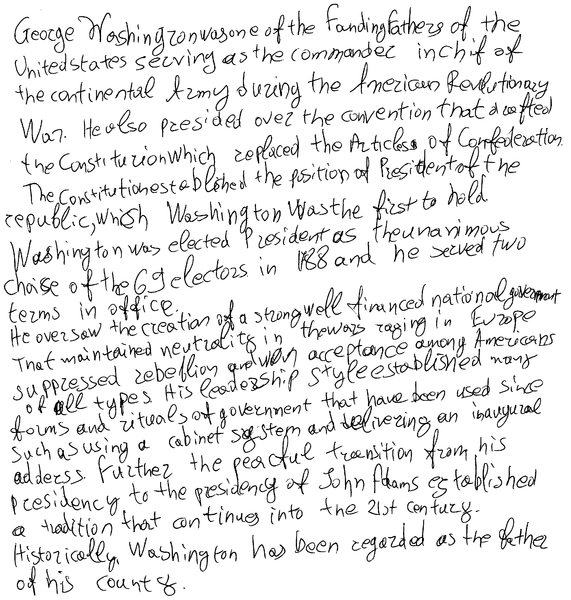}
  \caption{}
\end{subfigure}
\begin{subfigure}{.45\textwidth}
  \centering
  \includegraphics[width=.8\linewidth]{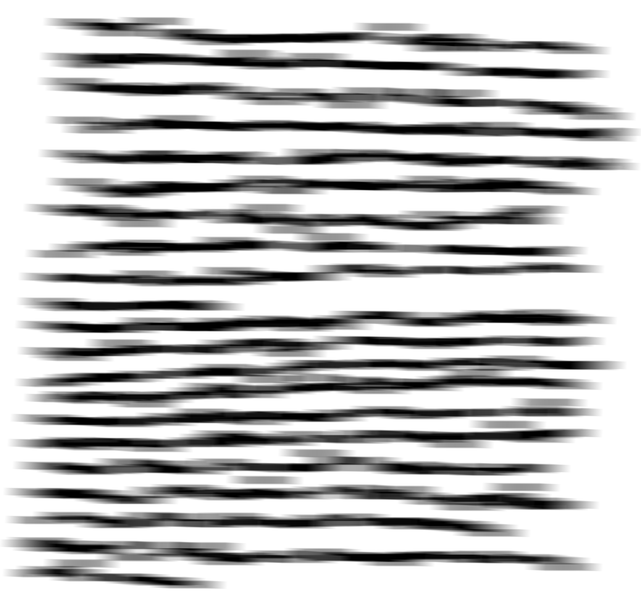}
  \caption{}
\end{subfigure}
\begin{subfigure}{.45\textwidth}
  \centering
  \includegraphics[width=.8\linewidth]{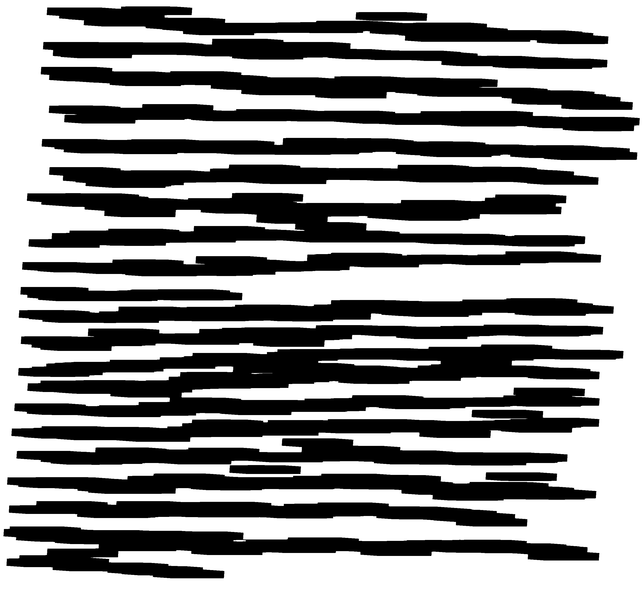}
  \caption{}
\end{subfigure}
\begin{subfigure}{.45\textwidth}
  \centering
  \includegraphics[width=.8\linewidth]{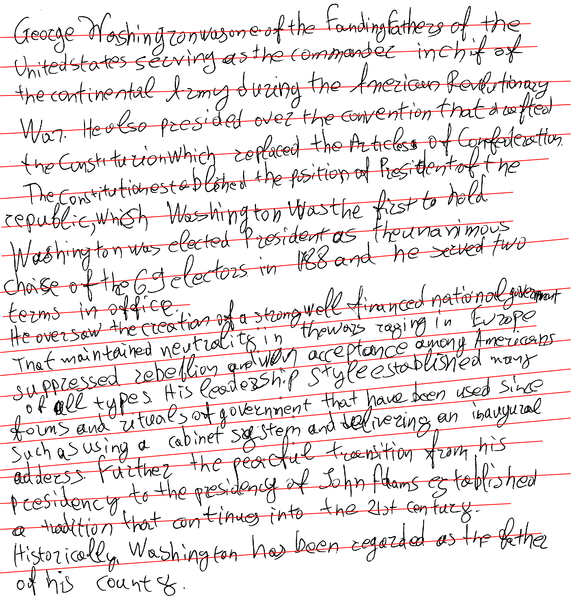}
  \caption{}
\end{subfigure}
\caption{a) Example of a non-accurate initialization process in a document image with crowded text. a) Original image. b) Result of Gaussian filtering. c) Resulting candidate blobs. d) Initial candidate regression lines.}
\label{fig:Initialization}
\end{figure}

\paragraph{\textbf{Initial line hypothesis}} 
It is known that the EM algorithm is often sensitive to the initial choice of parameters. An inaccurate initialization of line parameters may lead the method to fall into a local maximum that do not correspond with the better text line fitting.
We combine several common techniques to propose an initial set of regression lines.

\begin{itemize}

\item Blob estimation: We apply several steps based on the work in \cite{Ziaratban2010} for skew correction and blob identification. We apply a bank of anisotropic 2D Gaussian filters of size $W\times H$
on a range of orientations $\alpha$ and select the one with better response on the projection profile. A similar approach was previously proposed in~\cite{Bukhari2009, Bukhari2009b}. Then, we apply the Otsu binarization method to the filtered image in order to obtain a set of blobs that represent approximate line locations.

\item Overlapping detection: We analyze the obtained blobs in order to detect overlapping as result of touching or curved lines in the document.
To do so, we compute the mean connected component height and divide the blobs proportionally to a threshold $t_o$ of this value. 
Besides, we identify residual blobs result of filtering diacritics or noise components. We compute the ratio of text within each blob and remove the ones under a threshold $t_r$ learned from the training set.

\item Line estimation: The number of resulting blobs define the initial number of candidate lines. For each blob, we estimate the regression line parameters using the common line regression equations on the set of pixels that compose each of them.
\end{itemize}

The initialization step itself could be a good segmentation result in documents with simple layouts where lines are properly separated. In these cases, the execution of our posterior inference process will converge in a few iterations.
However, in complex documents with crowded or slightly curved text the process is more challenging and the initialization usually is not accurate enough, obtaining over-segmented text lines and incorrect initial line locations. \figurename~\ref{fig:Initialization} shows an example of a challenging image where only a few initial lines fit exactly the correct text line. 

A straightforward consequence of the initialization step is the possible over-estimation of lines. 
It is possible that a text line is approximated by two or more initial line segments. Besides, some diacritics from non-romance languages might be also approximated by a short line segments. 
This effect is not a drawback for our method, but the opposite. An initial over-segmentation is recommended, since we need to be sure that we fit the enough number of lines to cover all the text lines. In the case of initializing less than the correct number, some textual components will be probably assigned to the incorrect line, producing several miss detection.

\subsection{Post-process and final labeling}

In the post-process step we analyze the obtained result in order to detect and merge possible fragmented lines and remove surplus ones. After that, we label each of the textual connected components according to the probability given by the MRF model.
 
\paragraph{\textbf{Surplus lines removal}} We remove the extra lines remaining after the algorithm convergence.
Extra lines are featured by a low probability close to zero. We detect and remove these lines by identifying the ones which probability is under an $\varepsilon$ value fixed beforehand.

\paragraph{\textbf{Fragmented lines}} 
The over-segmentation from the initialization step may lead to a fragmentation of a text line. Since our model is linear, the method deals with curved lines by splitting the line into two or more segments. 
We analyze the relative position between the lines in order to identify these cases and unify the fragments into a single line.

\paragraph{\textbf{Final labeling}} 
For each variable $e$ we select the line $l$ that maximizes the probability $p_v(h_v = l|e_v,\Theta)$. We assign the connected component that contains the variable to the line only if all the variables within the component share the same label. Multiple labels in one component usually correspond with touching characters. In that case we label each pixel of the component by distance to the closest regression line.

\section{Experiments}
\label{Exp}

In this section, we describe the experiments performed for the task of handwritten line segmentation. 
We carry out a thorough evaluation on multiple benchmark datasets in order to prove the generality of our method to be applied on documents with different type of layouts and characteristics. 
Besides, we show in an additional experiment the impact of the selection of random pixels for different configurations.

\subsection{Parameters and settings}

Along the previous sections we define a set of parameters that we fix beforehand.
In the initialization step we apply a set of Gaussian filters with orientations in the range $\alpha = [-40,40]$ degrees, and a filter size of $H = \frac{1}{3} H_{cc}$ and $W = 10 W_{cc}$ with a vertical and horizontal standard deviation of $\frac{1}{3} H_{cc}$ and $\frac{10}{3} W_{cc}$, respectively. To identify overlapped and residual blobs we experimentally set $t_o = 2\bar{H}_{cc}$ and $t_r = 0.08$.
 
We fix the ratio $r=0.3$, see Eq.~\eqref{ff2}, and the prior text line probability thhreshold $\varepsilon=10^{-3}$ for extra text line removal. We fix the maximum number of iterations to $50$ and we set the KLD criterion to $K=10^{-4}$. 

We learn pairwise moments $\mu_k$, see Eq.~\eqref{ff1}, from the training set of ICDAR 2013. In addition, we set $c_v =1$. We use these parameter configuration for all the experiments, since they represent an accurate sample of common handwriting script.

\subsection{Metrics}

We report results according to the same metrics used in the ICDAR segmentation contests. The metric is based on counting the number of matches between the detected text lines and the text lines in the ground truth by computing the MatchScote table at pixel level \cite{Phillips1999}. It consist of: Detected lines (M), one-to-one matches (o2o), Detection Rate (DR\%), Recognition Accuracy (RA\%) and F-measure value (FM\%). 
For other datasets on which the ICDAR evaluation tool can not be used we provide results in terms of precision, recall and F-measure computed at pixel level. When possible, we compute Confidence Intervals with confidence value $\alpha=0.05$.

\subsection{Datasets}

We evaluate our method on several benchmark datasets.
On the one hand we evaluate it on the ICDAR 2009 and 2013 handwriting segmentation contest datasets. These datasets contain regular text documents where the text is the main part of the page. In general the documents are free of graphical or non-text elements although some of them may contain small noise.
ICDAR 2009 dataset is composed of 200 test images with 4043 text lines. The documents contain the same extract of text written by several writters in several languages (English, German, Greek and French). 
ICDAR 2013 dataset is an update of the previous one. The dataset contains a set of 150 test images with 2649 text lines also depicted by different writers and in several languages. New features comprise the addition of new more complex languages as Indian Bangla, and new layouts as multi-paragraph and complex skewed and cramped documents. Figure~\ref{fig:ICDAR} shows some examples of documents from this dataset.

%%%%%%% ICDAR figure
\begin{figure} [t]
\centering
\begin{subfigure}{.33\textwidth}
  \centering
  \fbox{\includegraphics[width=.9\linewidth]{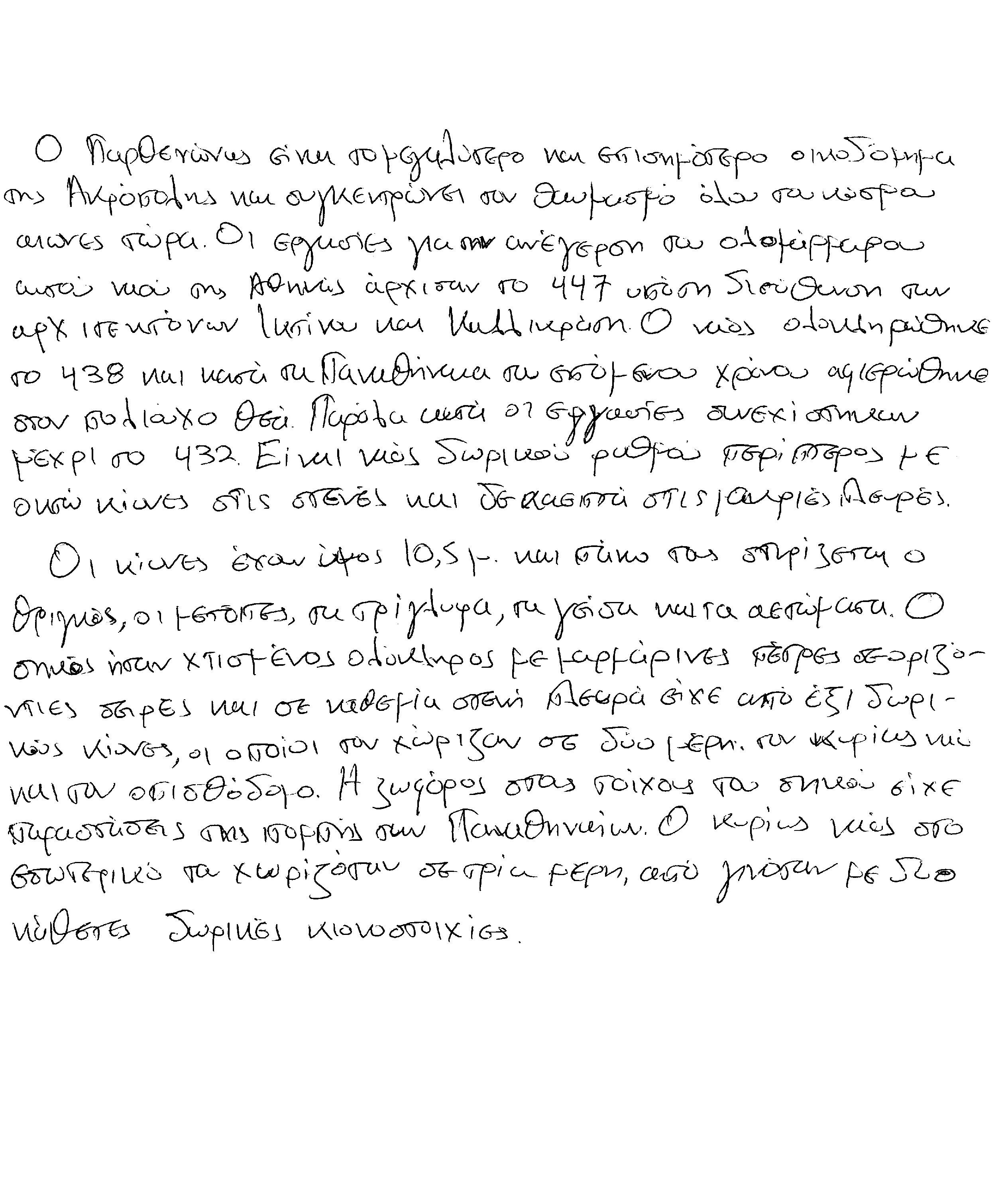}}
  \caption{}
\end{subfigure}%
\begin{subfigure}{.33\textwidth}
  \centering
  \fbox{\includegraphics[width=.9\linewidth]{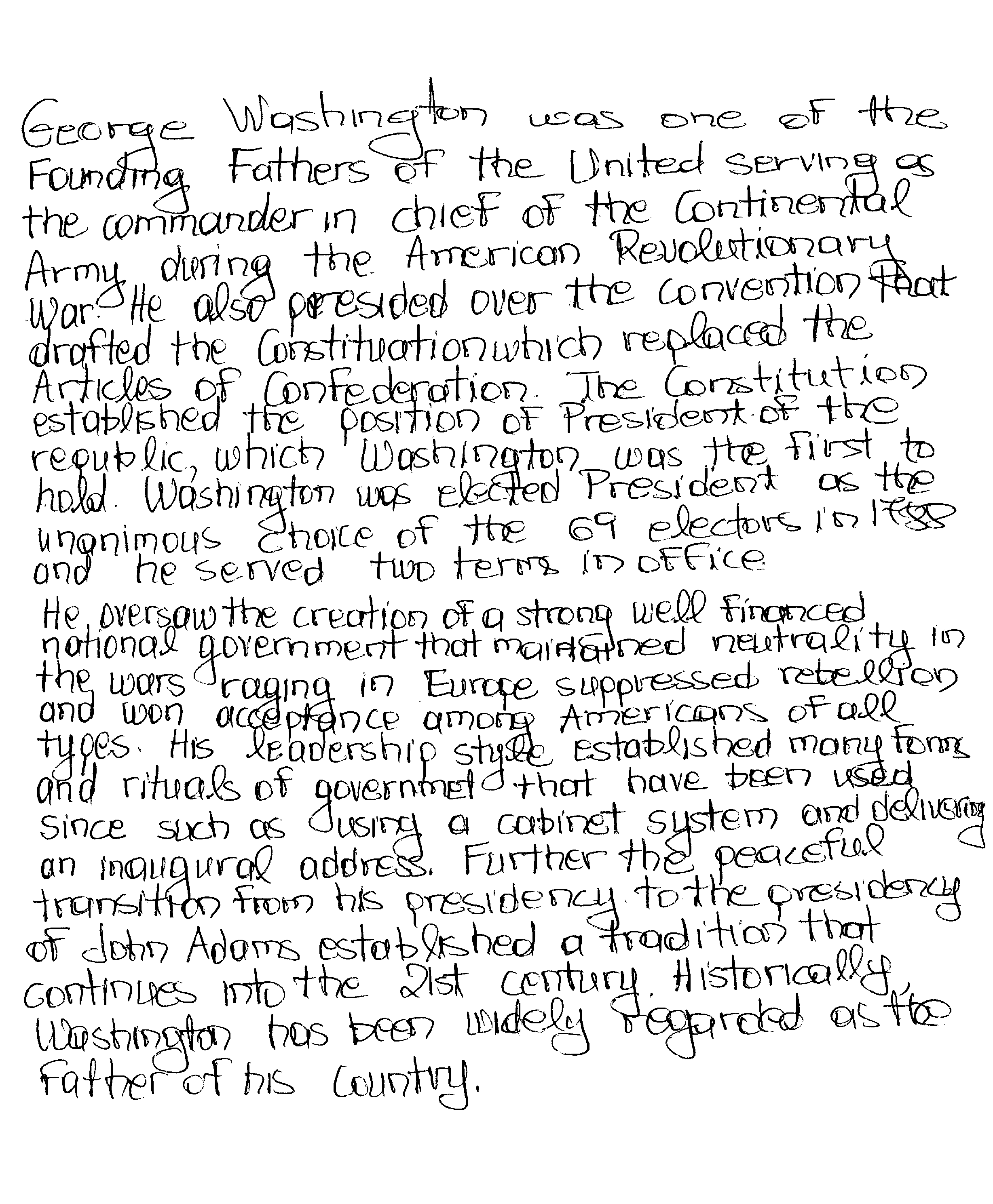}}
  \caption{}
\end{subfigure}
\begin{subfigure}{.33\textwidth}
  \centering
  \fbox{\includegraphics[width=.9\linewidth]{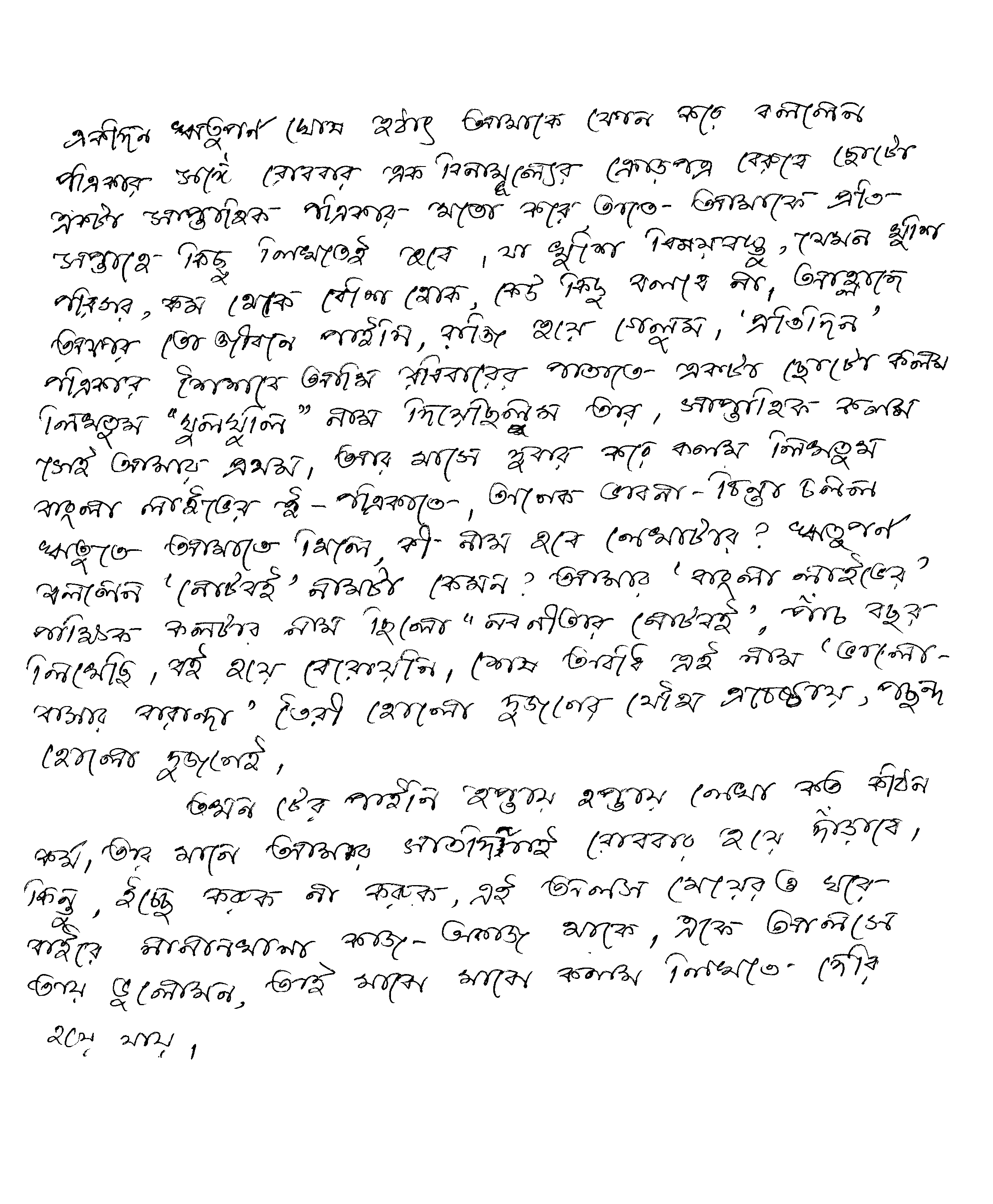}}
  \caption{}
\end{subfigure}
\caption{Three examples of document images from the ICDAR 2013 segmentation contest. This dataset contains documents written in several languages and free of graphical or non-text elements.}
\label{fig:ICDAR}
\end{figure}

On the other hand, we evaluate on the documents of the George Washington database~\cite{Fischer2012}. This database is composed of 20 gray-scale images from the George Washington Papers at the Library of Congress dated from the 18th century. The documents are written in English language in a longhand script. This database adds a set of different challenges with respect to the previous one due to the old script style, overlapping lines and a more complex layout. Also, documents may contain non-text elements as stamps or line separators. We show several examples in \figurename~\ref{fig:GW}.
We use the same ground truth introduced for this task in \cite{Fernandez2014} since there is not public ground truth for the task of line segmentation. For this reason, it is not possible to compare with any other methods apart from previous works and \cite{Fernandez2014}. We present the results as an indicator of the adaptability of our method to historical documents.

%%%%%%% GW figure
\begin{figure}[t]
\centering
\begin{subfigure}{.33\textwidth}
  \centering
  \includegraphics[width=.9\linewidth]{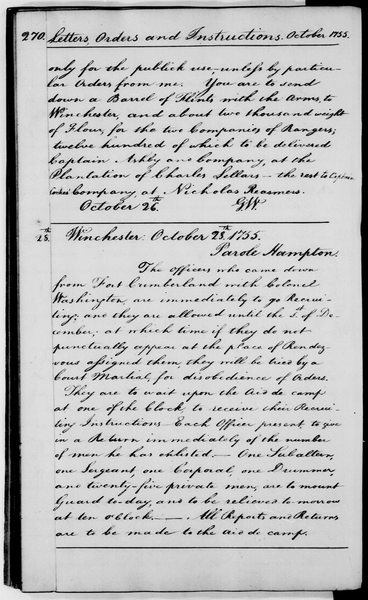}
  \caption{}
\end{subfigure}%
\begin{subfigure}{.33\textwidth}
  \centering
  \includegraphics[width=.9\linewidth]{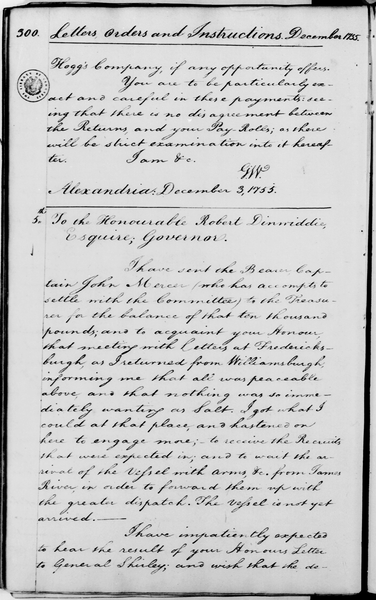}
  \caption{}
\end{subfigure}
\begin{subfigure}{.33\textwidth}
  \centering
  \includegraphics[width=.9\linewidth]{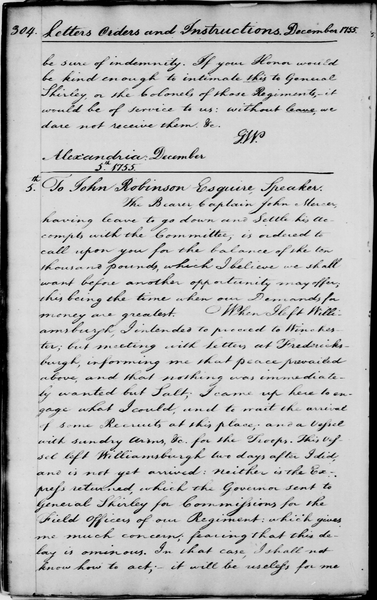}
  \caption{}
\end{subfigure}
\caption{Three examples of document images from the George Washington dataset. The dataset contains gray-scale document images including some non-text elements such as stamps or separator lines.}
\label{fig:GW}
\end{figure}

Last, we test our method in a collection of administrative documents with handwritten annotations. This is a more heterogeneous and complex dataset, since it contains documents with multiple text regions, each of them with different characteristics as orientation and writing style.
The collection includes letter-type documents, annotations in machine-printed documents, information from bank checks and other documents with complex layouts. 
The set of documents in the dataset is the result of the application of a previous machine-printed text separation \cite{Belaid2014}, in order to remove all possible not handwritten components. We apply the line segmentation algorithm on the handwritten layer without any particular filtering process 
The dataset is written in English and French languages and is composed of 433 document images. We show some examples of documents in \figurename~\ref{fig:DOD}.

%%%%%%% DOD figure
\begin{figure} [t]
\centering
\begin{subfigure}{.33\textwidth}
  \centering
  \fbox{\includegraphics[width=.8\linewidth]{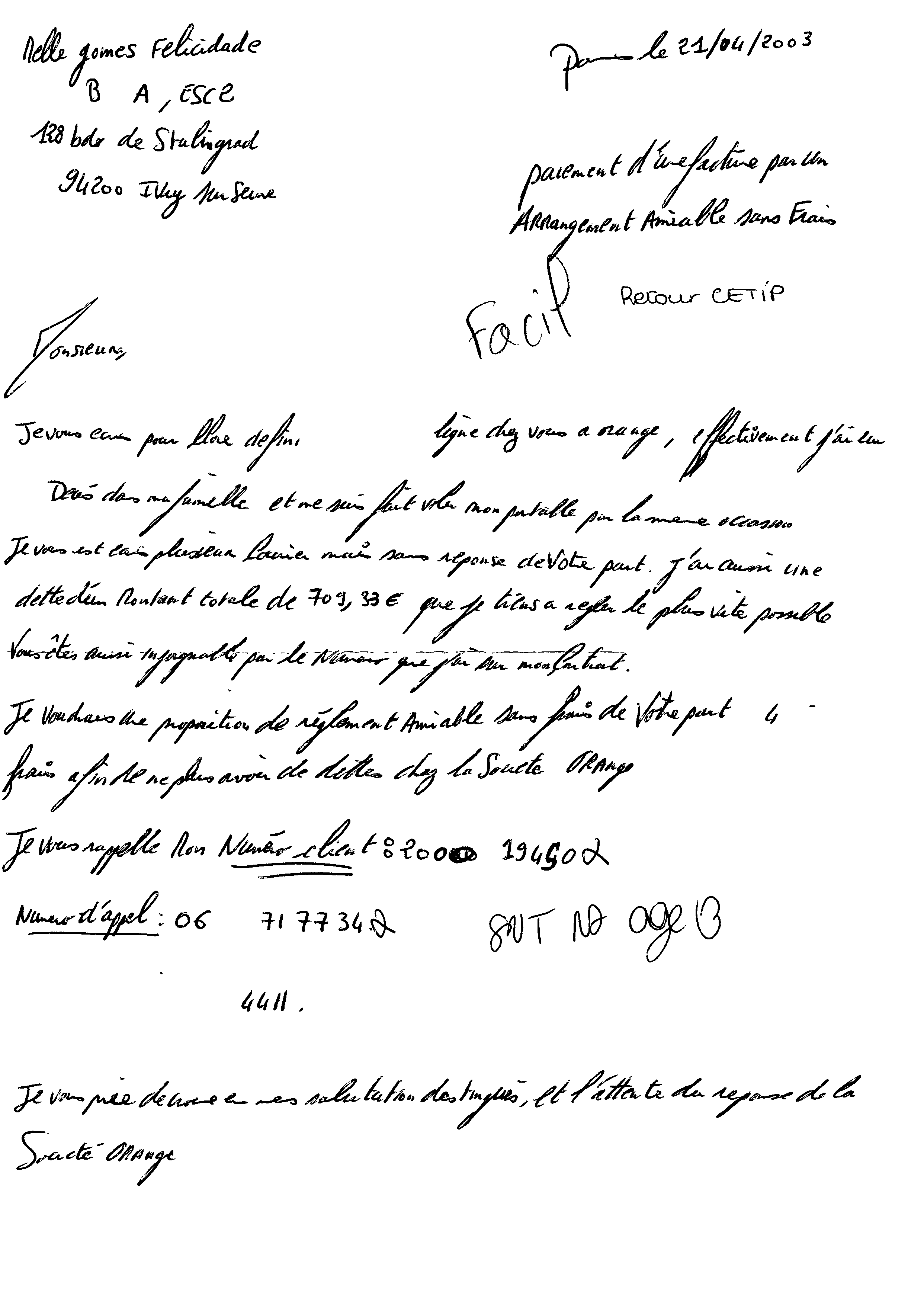}}
  \caption{}
  \label{fig:DODa}
\end{subfigure}%
\begin{subfigure}{.33\textwidth}
  \centering
  \fbox{\includegraphics[width=.8\linewidth]{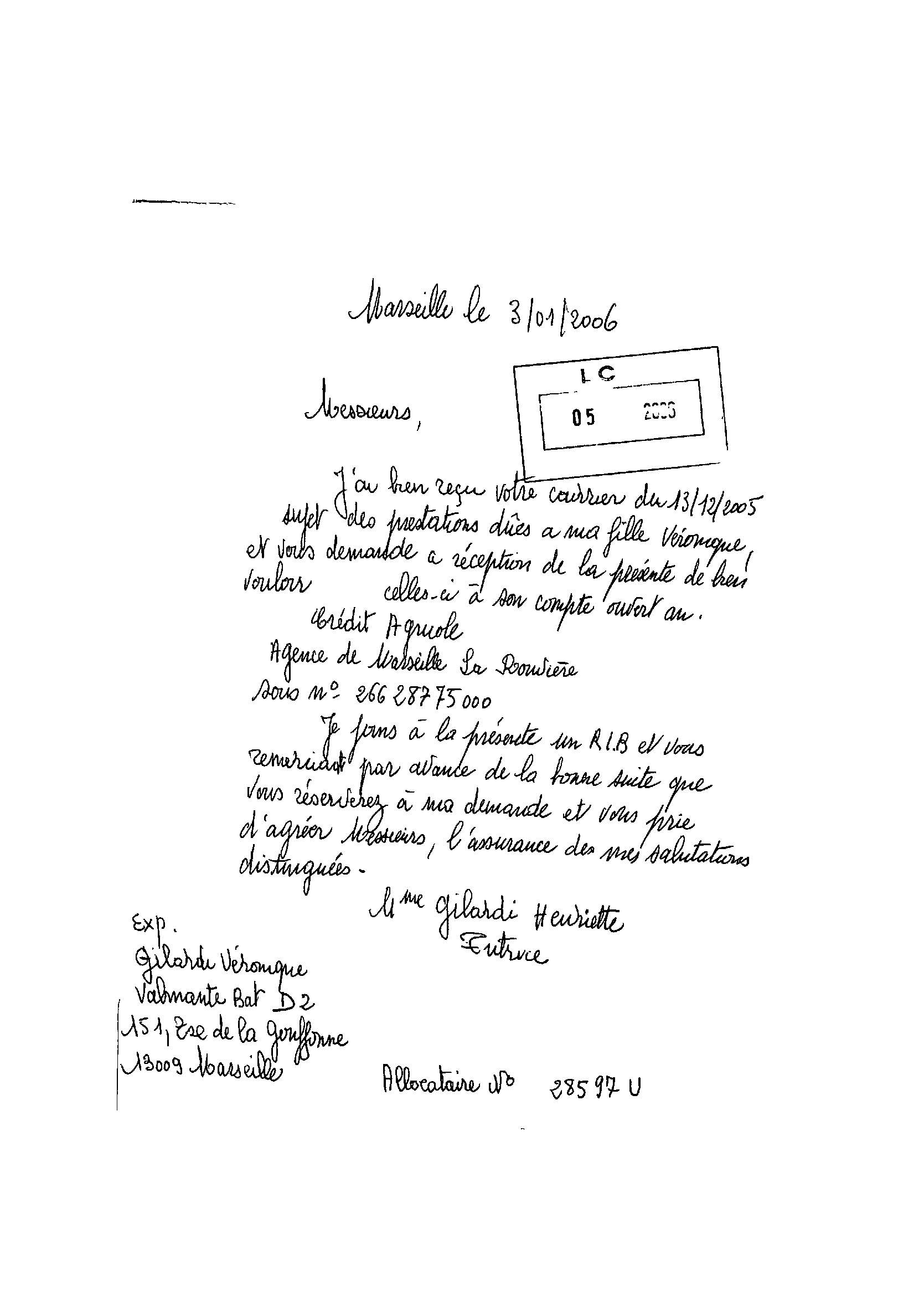}}
  \caption{}
  \label{fig:DODb}
\end{subfigure}
\begin{subfigure}{.33\textwidth}
  \centering
  \fbox{\includegraphics[width=.8\linewidth]{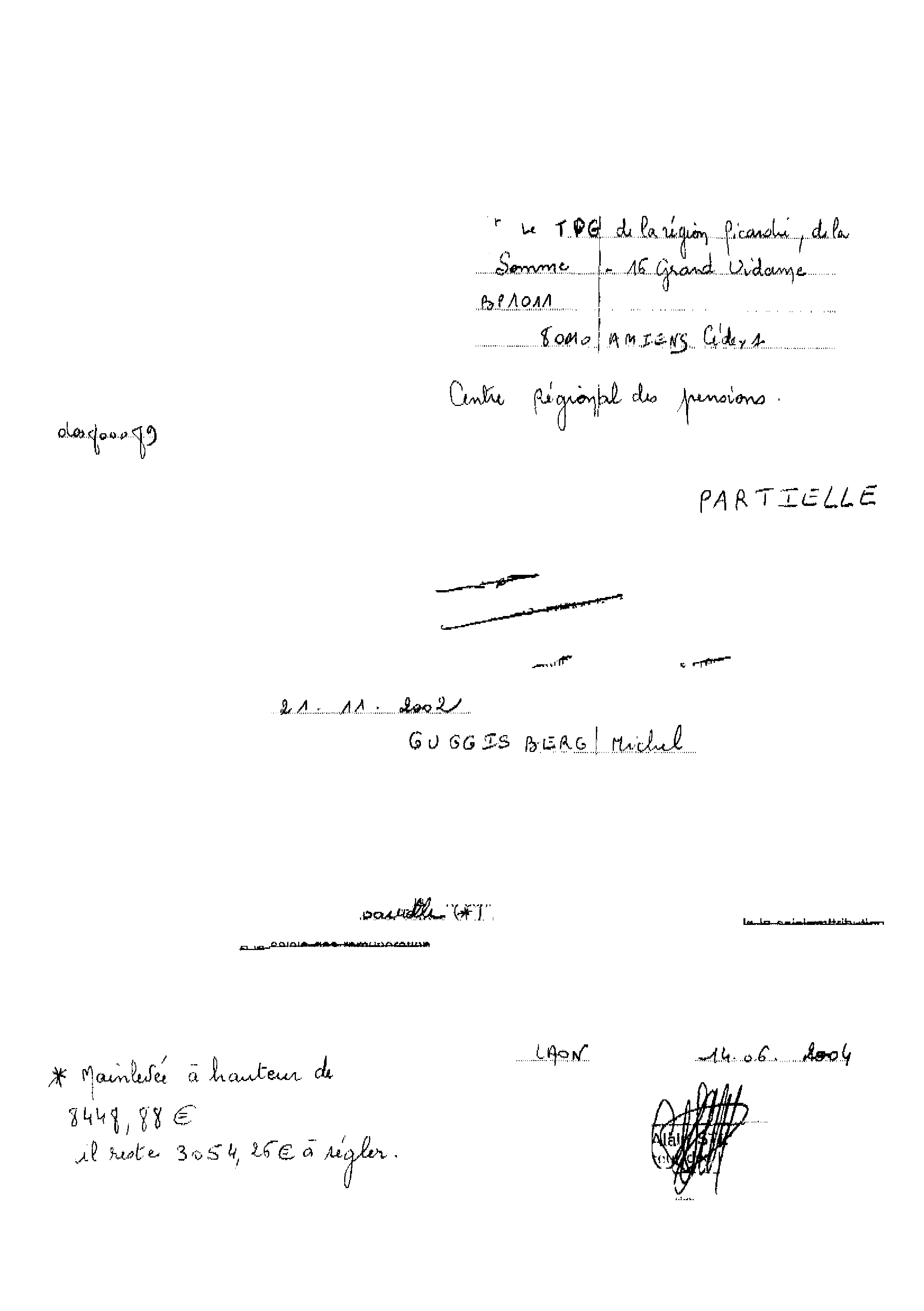}}
  \caption{}
  \label{fig:DODc}
\end{subfigure}
\caption{Three documents from the dataset of administrative annotated documents. In b) we observe some residues from the machine-printed separation step that could be interpreted as text components. The dataset contains documents with multiple layouts and text configurations.}
\label{fig:DOD}
\end{figure}

\subsection{Random pixel selection}

We aim at analyzing the impact of the density of random text pixels selected for the construction of the graphical model. We conduct this experiment on the ICDAR 2013 dataset for a pixel ratio of $1\%,3\%,5\%,10\%$ and $15\%$ of the total amount of text pixels. Table~\ref{tab:randompoints} shows the obtained results in terms of the F-measure, mean processing time, and its corresponding confidence intervals. 

We see that using values above $5\%$ do not produce significant improvements in the results, while the computational complexity increases considerably due to the large number of variables and connections in the MRF model.
With a $1\%$ of pixels, we obtain a $95.52\%$ in almost four times less computational time compared to $5\%$. However, the reduction in the number of pixels may leave some text regions uncovered, which can lead to an incorrect segmentation. 
Besides, we observe that the confidence interval for higher number of pixels increases. This implies that the method becomes less stable.
%Besides, we observed that a low density of connections between words from different lines, the messages sent between them can be not enough informative to favor the different labeling of neighbor lines and ensure label continuity between pixels of the same line. 
For the rest of experiments we select a $5\%$ of text pixels as standard value, since it seems to provide a good trade-off between data representation and time complexity.

\begin{table}[h]
\begin{center}
\begin{tabular}{c|c|c}
\hline
(\%) of points &  FM(\%) & Time(mean) \\
\hline \hline
1\%    & 95.52 $\pm$ 1.46  & 12.6s $\pm$ 1.1 \\
3\%    & 96.95 $\pm$ 1.24  & 28.4s $\pm$ 2.7 \\
5\%    & 97.05 $\pm$ 1.17  & 42.4s $\pm$ 3.3 \\
10\%   & 97.05 $\pm$ 1.18  & 81.3s $\pm$ 7.7 \\
15\%   & 97.05 $\pm$ 1.25  & 115.1s $\pm$ 11.2 \\
\hline
\end{tabular}
\end{center}
\caption{Results for different percentage of random points selected for the construction of the graphical model. We see as values above $5\%$ do not produce significant improvements in the results, while the computational complexity  considerably increases.}
\label{tab:randompoints}
\end{table}

\subsection{ICDAR segmentation contests}

We show in Table~\ref{tab:ICDAR2009} the results obtained on the ICDAR 2009 Handwriting Segmentation dataset. 
We obtain a $98.68\%$ FM value, with a confidence interval $[98.23, 99.13]$. This result compares with the top methods of the competition and overcomes the result obtained by previous works using a simpler probabilistic model \cite{Cruz2013}. 

In addition, the analysis of the results shows that $166/200$ images reach $100\%$ of FM, while the main errors are concentrated in a few error cases.
First type of error is related with the extra lines not removed in the post-process step that end up fitting diacritics or small isolated text components.  
This type of errors has a large impact in the numerical results, since implies an extra detection and may affect to several one-to-one text line associations. However, this error has no impact in posterior text recognition tasks, since text lines are usually well segmented. A severe case of extra line is shown in \figurename~\ref{fig:extralines}.
The second type of error is produced in areas where several touching characters converge. In this case, it is possible that the high connectivity within the MRF in this area favors the same labeling for all the text component instead of separating between text lines. An example of this last error can be seen in \figurename~\ref{fig:touchingwords}.

%%%%%%%%%%%%%%%%%%%%%%%%%%%%%%%%%%%%%%%%%%%%%%%%%%%%%%%%%%%%%%%%%%%%%%
\begin{table}[t]
\begin{center} 
\begin{tabular}{  c | c | c | c | c | c } 
%& \multicolumn{1}{|c|}{DR (\%)} & \multicolumn{1}{|c|}{RA (\%)} & \multicolumn{1}{|c|}{FM (\%)} \\ \hline
\hline
Method & M & o2o & {DR (\%)} & {RA (\%)} & {FM (\%)} \\ \hline \hline
CUBS  		  & 4036 & 4016 & 99.55 & 99.50 & 99.53 \\  %\hline 
ILSP-LWSeg-09 & 4043 & 4000 & 99.16 & 98.94 & 99.05 \\  %\hline 
HandwritingPAIS   & 4031 & 3973 & 98.49 & 98.56 & 98.52 \\  %\hline 
CMM 			  & 4044 & 3975 & 98.54 & 98.29 & 98.42 \\  %\hline 
Fernandez \textit{et al.} \cite{Fernandez2014}  & 4176 & 3971 & 98,40 & 95,00 & 96,67 \\ 
CASIA-MSTSeg  & 4049 & 3867 & 95.86 & 95.51 & 95.68 \\  %\hline 
Cruz \textit{et al}. \cite{Cruz2013} & 4061 & 3858 &	95.60 & 95.00 & 95.20 \\ %\hline 
PortoUniv     & 4028 & 3811 & 94.47 & 94.61 & 94.54 \\ % \hline
PPSL  		  & 4084 & 3792 & 94.00 & 92.85 & 93.42 \\ % \hline 
LRDE 	  	  & 4423 & 3901 & 96.70 & 88.20 & 92.25 \\ % \hline 
Jadavpur Univ & 4075 & 3541 & 87.78 & 86.90 & 87.34 \\ % \hline 
ETS 			  & 4033 & 3496 & 86.66 & 86.68 & 86.67 \\  %\hline 
AegeanUniv    & 4054 & 3130 & 77.59 & 77.21 & 77.40 \\  %\hline 
REGIM         & 4563 & 1629 & 40.38 & 35.70 & 37.20 \\ 
\hline
\hline
{Proposed} & {4044} & {3986} &	{98.81} & {98.56} & {98.68} \\ %\hline 
\hline
\end{tabular}
\end{center} 
\caption{Results on the ICDAR2009 handwriting segmentation contest~\cite{Gatos2009}. We improve our previously reported results and obtain FM values up to other state-of-the-art methods.} 
\label{tab:ICDAR2009} 
\end{table}
%%%%%%%%%%%%%%%%%%%%%%%%%%%%%%%%%%%%%%%%%%%%%%%%%%%%%%%%%%%%%%%%%%%%%%

In Table~\ref{tab:ICDAR2013} we show the result obtained on the ICDAR 2013 Handwriting Segmentation dataset. The additional complexity of this dataset is reflected in the results, where we obtain a $97.05\%$ FM value with a confidence interval $[95.80, 98.30]$.
In comparison with the rest of the methods we see that our method is slightly below the top methods in quantitative terms, although it overcomes many of them. However, we report a total of $125/150$ images labeled with a $100\%$ FM, and according to the confidence interval we can say that our method is stable along all the dataset.

As in the previous experiment, the $80\%$ of the errors are related to extra lines fitting isolated components. The remaining $20\%$ is related to the new characteristics of this dataset. For instance, some crowded images where our overlapping detector is not able to split them, \figurename~\ref{fig:severalerrors}. Nevertheless, in the practice our method is able to deal with the majority of these situations as seen in \figurename~\ref{fig:challenges}.

%%%%%%%%%%%%%%%%%%%%%%%%%%%%%%%%%%%%%%%%%%%%%%%%%%%%%%%%%%%%%%%%%%%%%%
%%%%%%%%%%%%%%%%%%%%%%%%%%%%%%%%%%%%%%%%%%%%%%%%%%%%%%%%%%%%%%%%%%%%%%
\begin{table}[t]
\begin{center}
\begin{tabular}{c|c|c|c|c|c}
\hline
Method     &   M  &  o2o &  DR(\%) &  RA(\%) &  FM(\%) \\
\hline \hline
INMC  	   & 2614 & 2614 &  98.68  &  98.64  &  98.66  \\
NUS  	   & 2645 & 2605 &  98.34  &  98.49  &  98.41  \\
GOLESTAN-a & 2646 & 2602 &  98.23  &  98.34  &  98.28  \\
CUBS  	   & 2677 & 2595 &  97.96  &  96.94  &  97.45  \\
IRISA      & 2674 & 2592 &  97.85  &  96.93  &  97.39  \\
LRDE  	   & 2632 & 2568 &  96.94  &  97.57  &  97.25  \\
Fernandez \textit{et al.} \cite{Fernandez2014}  & 2697 & 2551 & 96,30 & 94,58 & 95,43 \\ 
QATAR-b	   & 2609 & 2430 &  91.73  &  73.14  &  92.43  \\
MSHK   	   & 2696 & 2428 &  91.66  &  90.06  &  90.85  \\
CVC   	   & 2715 & 2418 & 91.28 & 89.06 & 90.16 \\
\hline
\hline
{Proposed}   & {2647} & {2570} &  {97.01}  & {97.09}  &  {97.05} \\
\hline
\end{tabular}
\end{center}
\caption{Results on the ICDAR2013 handwriting segmentation contest~\cite{Gatos2013}. Our method still obtains results up to the other contestant methods and improves our previously reported results.} 
\label{tab:ICDAR2013} 
\end{table}
%%%%%%%%%%%%%%%%%%%%%%%%%%%%%%%%%%%%%%%%%%%%%%%%%%%%%%%%%%%%%%%%%%%%%%
%%%%%%%%%%%%%%%%%%%%%%%%%%%%%%%%%%%%%%%%%%%%%%%%%%%%%%%%%%%%%%%%%%%%%%

\begin{figure}[t]
	\centering
	\includegraphics[width=.95\linewidth]{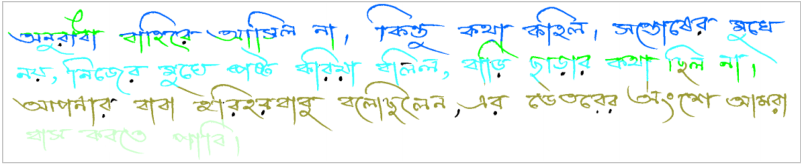}
	\caption{Example of the impact of an extra regression line not removed in the post-process step. In this severe case several lines are affected by the incorrect labeling of some of their text pixels to the extra regression line (text line in green).}
\label{fig:extralines}
\end{figure}

\begin{figure}[t]
	\centering
	\includegraphics[width=.95\linewidth]{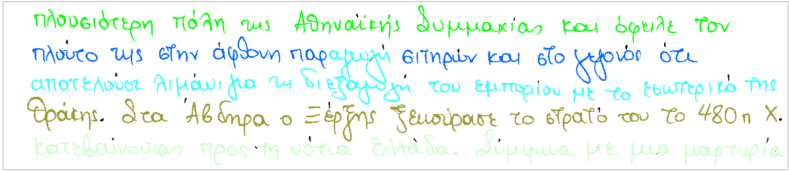}
	\caption{Segmentation error in a region with several touching characters. This is an example of how the messages sent between variables from the involved words favor the same labeling for every component in conflict.}
\label{fig:touchingwords}
\end{figure}

\begin{figure}[t]
\begin{center}
	\includegraphics[width=.95\linewidth]{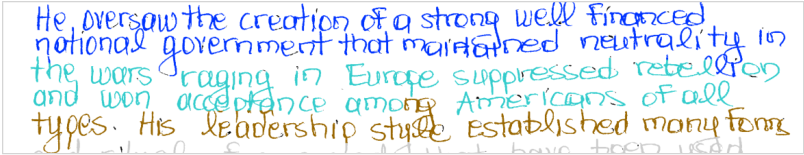}
	\caption{Example of several overlapped lines that were not detected in the initialization step. This produces an incorrect number of initial candidates which is difficult to overcome in the rest of the process.}
\label{fig:severalerrors}
\end{center}
\end{figure}

\subsection{George Washington dataset}

Table~\ref{tab:GW} shows the results obtained on the George Washington dataset. We improve our previous results with a $90.06\%$ FM with an increase of the DR of almost a $10\%$. Low RA values are caused by non-text components, since we have not considered some features of this dataset that in other circumstances could be integrated for a better performance. However, we want to use the same model configuration for all the experiments in order to evaluate the adaptability to other datasets without parameter tuning or additional training. 

For instance, the detection of non-text elements as stamps, text line separators, or underlines affects to the numerical results. In the case of underlines, they are labeled as text line component, while our method recognizes it separately. The same effect happens in some arbitrary separator lines. 
As in the case of the extra text line detection, this error has an impact in the numerical results, although the final set of detected text lines is usually correct. Again, our method obtains better results in views of posterior text recognition tasks, since it is able to separate text from other non-textual components.
Results on this dataset prove the capability of our method for segmenting text lines in historical documents without the need of reconfiguration.

%Resultados GW
\begin{table} [t]
\begin{center} 
\begin{tabular}{  c | c | c | c | c | c } 
\hline 
%& \multicolumn{1}{|c|}{DR (\%)} & \multicolumn{1}{|c|}{RA (\%)} & \multicolumn{1}{|c|}{FM (\%)} \\ \hline
Method & M & o2o & {DR (\%)} & {RA (\%)} & {FM (\%)} \\ 
\hline \hline
Fernandez \textit{et al.}\cite{Fernandez2014} 	& 693 & 653 & 91,30 & 94,20 & 92,70 \\ 
Cruz \textit{et al.} \cite{Cruz2013} & 631 & 551 & 82,60 & 87,30 & 84,80 \\ 
Base line \cite{Fernandez2014}  & 727 & 338 & 47,20 & 46,40 & 46,70 \\ 
\hline
\hline
Proposed & 702 & 614 & 92.05 & 88.16 & 90.06 \\ 
\hline
\end{tabular}
\end{center} \caption{Results on the George Washington dataset. Lower o2o and RA rates are produced due to the identification of separator lines as independent text line. However, qualitative results show that our method produces a better detection of text-lines with respect to the other compared methods.} 
\label{tab:GW}
\end{table}

\subsection{Administrative annotated documents}

As for the GW dataset, we use the same parameter configuration than for the ICDAR experiments. 
In this dataset one of the challenges is to detect the different text regions in order to process them separately. For instance, in \figurename~\ref{fig:DODa} we can appreciate at the bottom of the central block of text three text lines with different font that have to be labeled separately. We can see another example in \figurename~\ref{fig:DODb}, where several lines at the bottom of the document may be merged as the same line in the case of processing the full page.
Our method behaves on these cases on two possible ways. In most of the cases the different text regions are detected in the initialization step and then processed separately. 
However, in some documents where the region segmentation is not achieved, text lines are approximated by several regression lines due to the initial over-segmentation.

On this experiment we are not able to compare to other works, since it is a non-published collection of documents, however we can compare against our previous work in order to validate the new model.
Table~\ref{tab:DOD} shows the results obtained. We see that the results are significantly improved with the new proposed approach. The observed improvement confirm the contribution of the proposed model.
In addition, the result on this dataset proves the versatility of our method on complex layouts.

\begin{table}[t]
\begin{center}
\begin{tabular}{c|c|c|c}
\hline
           &  Precision   &  Recall  &  FM(\%) \\
\hline \hline
Cruz \textit{et al}. \cite{Cruz2013}  & 69.45 & 72.38  &  70.88  \\
{Proposed}  	 & {79.75} & {82.70}  &  {81.19}  \\
\hline
\end{tabular}
\end{center}
\caption{Results on the administrative handwritten annotation dataset. We significantly improve the results obtained by our previous model.}
\label{tab:DOD}
\end{table}

\begin{figure}
\begin{center}
	\includegraphics[width=.80\linewidth]{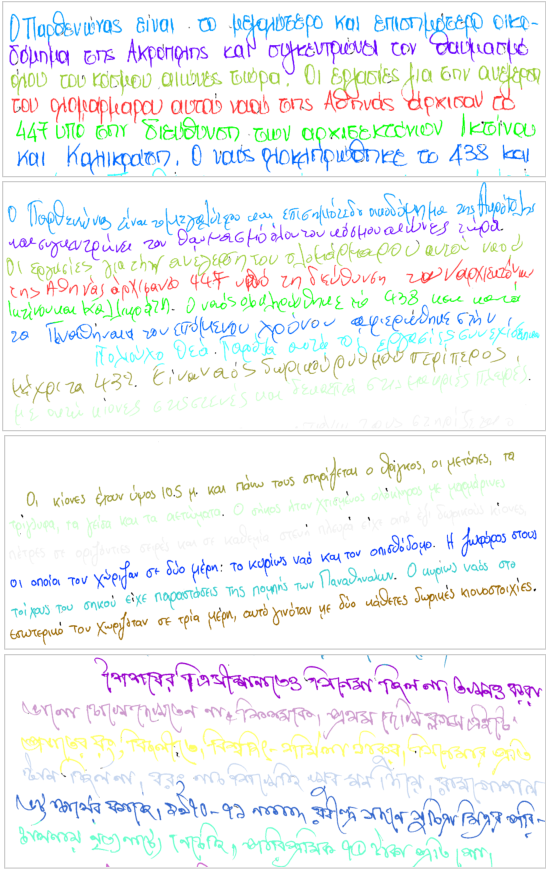}
	\caption{Results of four fragments of images from the ICDAR 2013 dataset that include crowded text and light curvatures. We can see several challenging scenarios where our method performs correctly.}
	\label{fig:challenges}
\end{center}
\end{figure}

\newpage

\section{Conclusion}
\label{Concl}

In this paper we present a general method for handwriting text line segmentation based on the estimation of a set of regression lines. 
We propose a probabilistic framework that relies on the EM algorithm, for the estimation of the regression parameters, and on a MRF model, for parameter learning of neighboring pixels. We implement a message-passing-based algorithm to compute approximate inference and learning the model parameters.
Our method can be applied on documents with different layouts and features. Besides, our framework permits to easily extend the model with the inclusion of prior information by means of new feature functions.

We conduct several experiments with promising results on four collections of documents without model reconfiguration. The selected datasets include several types of layouts, historical an contemporary documents, from several writers and scripts. Besides, our method is able to deal with most of the situations regarding touching text lines and light curvatures of the text, which demonstrates the contributions of the proposed model.
The results validate our initial hypothesis, since we prove that a set of regression lines can fit with high accuracy the actual text lines locations.
%, and the role of the graphical model was determinant to deal with the common issues of the task. 

As future work lines, we consider to use higher order regression model that could lead to better approximation of curved and complex lines. Besides, we think that some of the current errors may be corrected with the inclusion of more informative feature functions. Note that we use a reduced and basic set of feature functions based on the pixel location and common pairwise interactions. We believe that the inclusion of more specific knowledge could improve the overall results. For instance, we can incorporate improved pairwise feature that analyzes the edge length and relative position between the connected variables. 
In addition, we plan to integrate new discriminant features in order to perform machine-printed/handwritten text separation and line segmentation within the same process.
In this way we will be able to process administrative annotated documents directly without the need of previous steps.

\section*{Acknowledgements}

This work has been partially supported by the Spanish project TIN2015-70924-C2-2-R.

\section*{Bibliography}

\bibliographystyle{elsarticle-num} 
\bibliography{thesisbib}

\setcounter{section}{0}

\title{Supplementary Material: A probabilistic framework for handwritten text line segmentation}

\author{Francisco~Cruz, Oriol~Ramos~Terrades}

\maketitle

%\begin{abstract}
This document contains the supplementary material for the paper: \textit{A probabilistic framework for handwritten text line segmentation}. We  include the derivation of the update equations for linear regression parameters. Besides, we show the derivation steps of Algorithm \ref{alg:MP} for inference and parameter learning. 
%\end{abstract}

\newpage

\section{Update equations for linear regression parameters }

In this  section we provide  a  complete derivation of linear regression parameters for the the  model introduced in section 3.

\begin{prop}[Joint partition function]
Given the factorization of Eq. (1)  of paper:
\begin{equation}
p(e,h|\Theta) = \prod_vp_v(e_v|h_v,\Theta_a)p(h|\Theta_b)
\end{equation}
The  partition function  $Z(\Theta)$ of the joint distribution  $p(e,h|\Theta)$ is the partition function of the {\em a priori} distribution of hidden variables $h$, $Z_0(\Theta_b)$.
\end{prop}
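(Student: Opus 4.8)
The plan is to compute $Z(\Theta)$ straight from its definition as the normalizer of the unnormalized product of factors, and to show that marginalizing out the continuous coordinates $e$ contributes nothing beyond a factor of one. Writing the joint in its Gibbs form as in Eq.~(1), the partition function is
\[
Z(\Theta) = \sum_{h} \int \prod_v \Psi_v(e_v,h_v|\Theta_a) \prod_u \Psi_u(h_u|\Theta_b)\, de .
\]
First I would use the topology of $\mathcal{G}$: every observed node $e_v$ is a leaf attached to a single hidden node $h_v$ and appears in no $\Psi_u$. Hence the factors $\prod_u \Psi_u(h_u|\Theta_b)$ are constant in $e$ and pull out of the integral, while the integral over $e=(e_1,\dots,e_N)$ separates into a product of per-pixel integrals, giving
\[
Z(\Theta) = \sum_{h} \left[\prod_v \int \Psi_v(e_v,h_v|\Theta_a)\, de_v\right] \prod_u \Psi_u(h_u|\Theta_b) .
\]

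The crux is the second step: the factorization supplied by Eq.~(1) identifies the $v$-factors with the conditional likelihoods $p_v(e_v|h_v,\Theta_a)$, which for every fixed label $h_v$ are genuine probability densities in $e_v$. Consequently each inner integral equals $1$ regardless of the value of $h_v$, so the bracketed product collapses to $1$ for every configuration $h$. What survives is $Z(\Theta) = \sum_h \prod_u \Psi_u(h_u|\Theta_b)$, which is exactly the definition of $Z_0(\Theta_b)$, the normalizer of the prior $p(h|\Theta_b)=Z_0(\Theta_b)^{-1}\prod_u \Psi_u(h_u|\Theta_b)$. Equivalently, I would phrase this as: marginalizing $e$ out of the joint returns the prior $p(h|\Theta)=p(h|\Theta_b)$ unchanged, and matching the two Gibbs representations of this marginal forces $Z(\Theta)=Z_0(\Theta_b)$.

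The main point requiring care --- and essentially the entire content of the statement --- is precisely that each conditional factor integrates to $1$ rather than to a label-dependent constant $Z_v(h_v,\Theta_a)$. This is what makes the $e$-integral cancel cleanly, and it holds here because the $p_v$ in the hypothesis are \emph{normalized} conditional densities, so the per-line Gaussian normalizers have already been divided out inside each $p_v$. I would therefore make explicit that $\int \Psi_v(e_v,h_v|\Theta_a)\,de_v = 1$ follows directly from the definition of $p_v(e_v|h_v,\Theta_a)$, since this normalization (together with the leaf structure of the observed nodes) is the only place where the structure of the model genuinely enters the argument.
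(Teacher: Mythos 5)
Your proof is correct and follows essentially the same route as the paper's: write $Z(\Theta)$ as a sum over $h$ and an integral over $e$, pull the $e$-independent pairwise factors out, factor the integral per pixel, and use the fact that each $p_v(e_v|h_v,\Theta_a)$ is a normalized density so every integral equals $1$, leaving $\sum_h \prod_u \Psi_u(h_u|\Theta_b) = Z_0(\Theta_b)$. Your explicit remark that the hypothesis supplies \emph{normalized} conditionals (so no label-dependent constants $Z_v(h_v,\Theta_a)$ survive the integration) is exactly the step the paper invokes with ``$p_v$ is a pdf and the integral is 1,'' just stated with more care.
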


\begin{proof}
Straightforward from  the  definitions of partition functions:
\begin{equation}
Z(\Theta) =  \sum_h  \int \prod_vp_v(e_v|h_v,\Theta_a)p(h|\Theta_b) de
\end{equation}

\noindent where the  sum runs over the  values  of all  hidden  variables and  the integral domain is $\mathbb{R}^{2N}$, which correspond to  the coordinates of the $N$ observed values $e_v$. A simple  reordering  of the  integral operations leads to  the final result, since  $p_v(e_v|h_v,\Theta_a)$ is a  pdf  and  the  integral is 1:

\begin{equation}
Z(\Theta) \triangleq   \sum_h  \prod_v \int p_v(e_v|h_v,\Theta_a)de_v p(h|\Theta_b)  = \sum_h   p(h|\Theta_b) \triangleq  Z_0(\Theta_b)
\end{equation}

\end{proof}

We defined the conditional likelihood probability $p_v(e_v|h_v,\Theta_a)=\frac{\Psi_v(h_{v},e_{v}| \Theta_a)}{Z_v(h_v,\Theta_a)}$, where $Z_v(h_v,\Theta_a)$ is its corresponding partition function defined as:

\begin{equation}
Z_v(h_v,\Theta_a) =  \int \exp\left\{ \sum_{k \in I_{v}}  g_k(h_{v},e_{v}| \Theta_a) \right\}de_v
\end{equation}

Given the above definitions the  conditional expectation $Q$ used to  derive the EM algorithm becomes:

\begin{equation}
\label{eq:sup_Q}
\begin{split}
Q(\Theta | \Theta') &= \sum_u \sum_{k \in I_u} \left[ \sum_{h_u} f_k(h_u | \Theta_b) p_u(h_u|\Theta_b') \right] - \log Z_0(\Theta_b)   + \\
&+  \sum_v  \sum_{h_v}\log p_v(e_v|h_v,\Theta_a) p_v(h_v|e_v,\Theta_a',\Theta_b') \\
&= \sum_v   \sum_{h_v}  \left[\sum_{k \in I_v}g_k(h_v,e_v | \Theta_a)  -  \log Z_v(h_v,\Theta_a)\right]p_v(h_v|e_v,\Theta_a',\Theta_b')  + \\
&+  \sum_u \sum_{k \in I_u} \left[ \sum_{h_u} f_k(h_u | \Theta_b) p_u(h_u|\Theta_b') \right] - \log Z_0(\Theta_b)
\end{split}
\end{equation}

We adapt the EM algorithm to update the parameters in $\Theta_a$, while we keep fix the parameters in $\Theta_b$. We will update later $\Theta_b$ by means of Algorithm~\ref{alg:MP}. The partial derivative of $Q$ for a parameter $\theta_k \in \Theta_a$ is:

\begin{equation}
\label{EqPartial}
\begin{split}
\frac{\partial }{\partial\theta_k}Q(\Theta | \Theta')  &= \sum_v   \sum_{k \in I_v} \sum_{h_v} \frac{\partial }{\partial\theta_k} g_k(h_v,e_v | \Theta_a) p_v(h_v|e_v,\Theta_a',\Theta_b) -  \\ 
&- \sum_v \sum_{h_v}\frac{\partial}{\partial\theta_k}\log Z_v(h_v,\Theta_a)p_v(h_v|e_v,\Theta_a',\Theta_b)= 0 
\end{split}
\end{equation}

In case we model text lines by the  likelihood probabilities  below:
\begin{equation}
\label{eq:Gauss1}
\begin{split}
p_t(x_v,y_v | h_v=l,\theta_a) &\propto \exp \left\{ -\frac{(y_v - a_l x_v - b_l)^2}{2 \sigma_{l,t}^2}  \right\} \\
p_s(x_v,y_v | h_v=l,\theta_a) &\propto \exp \left\{ -\frac{(x_v - c_l)^2}{2 \sigma_{l,s}^2}  \right\}
\end{split}
\end{equation}

The $v$-terms of  Eq.~\eqref{eq:sup_Q},  after  some manipulations and ignoring terms, which will disappear after taking partial derivatives, can be written as:

\begin{equation}\label{eq:Q_v}
\sum_v \sum_{h_v} \left( -\frac{1}{2}(A_le_v-\mu_l)^t\Sigma_l^{-1}(A_le_v-\mu_l) - \frac{1}{2}\log | \Sigma_l|\right)p(h_v=l|e_v,\Theta'_a,\Theta_b) 
\end{equation}

\noindent where $A_l=\left(\begin{array}{cc}1 & 0 \\ -a_l & 1\end{array}\right)$, $\Sigma_l=\left(\begin{array}{cc}\sigma^2_{l,t} & 0 \\0 & \sigma^2_{l,s} \end{array}\right) $ and $\mu_l=\left(\begin{array}{c}c_l \\ b_l\end{array}\right)$. Moreover, $ | \Sigma_l|$ is the partition function of the conditional likelihood probability given by  the next Proposition:

\begin{prop}[Partition function of conditional likelihood probability]
The partition function $Z_v(h_v=l,\Theta_a)$ is:
\begin{equation}
Z_v(h_v=l,\Theta_a) =  2\pi| \Sigma_l|^{1/2}
\end{equation}
\end{prop}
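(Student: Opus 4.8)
The plan is to evaluate the defining integral directly, recognizing the integrand as an unnormalized bivariate Gaussian density in $e_v=(x_v,y_v)$ and reading off its normalizing constant. This gives $Z_v$ as a function of the variances alone, matching the remark in the main text that only $\sigma_{l,t}^2$ and $\sigma_{l,s}^2$ enter $\log Z_v$.

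First I would substitute the feature functions into the definition of $Z_v(h_v=l,\Theta_a)$. Since $\log\Psi_v=\sum_{k\in I_v}g_k$ reproduces the exponents of the two densities in Eq.~\eqref{eq:Gauss1}, the integrand becomes
\[
\exp\left\{ -\frac{(y_v - a_l x_v - b_l)^2}{2\sigma_{l,t}^2} - \frac{(x_v - c_l)^2}{2\sigma_{l,s}^2} \right\},
\]
integrated over $e_v\in\mathbb{R}^2$. Equivalently, using the matrix form already derived in Eq.~\eqref{eq:Q_v}, the exponent is $-\tfrac12(A_le_v-\mu_l)^t\Sigma_l^{-1}(A_le_v-\mu_l)$.

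Then I would carry out the integration. The cleanest route is sequential: integrating over $y_v$ with $x_v$ held fixed gives a one-dimensional Gaussian whose mean is the affine quantity $a_lx_v+b_l$; because the integral runs over all of $\mathbb{R}$, the mean shift is irrelevant and it equals $\sqrt{2\pi}\,\sigma_{l,t}$ independently of $x_v$. The remaining integral over $x_v$ is again a one-dimensional Gaussian contributing $\sqrt{2\pi}\,\sigma_{l,s}$, so the product is $2\pi\,\sigma_{l,t}\sigma_{l,s}=2\pi|\Sigma_l|^{1/2}$, since $|\Sigma_l|=\sigma_{l,t}^2\sigma_{l,s}^2$. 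Alternatively, the matrix form yields a one-line argument: the change of variables $w=A_le_v$ has unit Jacobian because $A_l$ is lower-triangular with unit diagonal ($\det A_l=1$), reducing the integral to the standard normalization $\int_{\mathbb{R}^2}\exp\{-\tfrac12(w-\mu_l)^t\Sigma_l^{-1}(w-\mu_l)\}\,dw=(2\pi)^{2/2}|\Sigma_l|^{1/2}=2\pi|\Sigma_l|^{1/2}$.

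There is essentially no hard step here; it is a routine Gaussian integral. The only point requiring care is that the linear coupling between $x_v$ and $y_v$ through $a_l$ does not alter the normalization: in the sequential view this is because translating a Gaussian leaves its integral over $\mathbb{R}$ unchanged, and in the matrix view it is because the shear $A_l$ has determinant one. Making this explicit is precisely what guarantees the result depends only on the variances, and not on $a_l$, $b_l$, or $c_l$.
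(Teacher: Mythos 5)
Your proof is correct, and your primary route is genuinely different from the paper's. The paper works entirely in matrix form: it writes the exponent as $-\tfrac12(A_le_v-\mu_l)^t\Sigma_l^{-1}(A_le_v-\mu_l)$, absorbs the shear into a new covariance $S_l^{-1}=A_l^t\Sigma_l^{-1}A_l$, invokes the standard bivariate normal normalization $2\pi|S_l|^{1/2}$, and concludes via $|A_l|=1$ that $|S_l|=|\Sigma_l|$. Your ``alternative one-line argument'' (change of variables $w=A_le_v$ with unit Jacobian) is that same computation packaged as a substitution rather than a redefinition of the covariance, so it is essentially the paper's proof. Your sequential Fubini route, by contrast, is more elementary: it needs only the scalar identity $\int_{\mathbb{R}}e^{-(t-m)^2/2\sigma^2}\,dt=\sqrt{2\pi}\,\sigma$ together with the observation that the coupling term $a_lx_v$ enters the inner integral only as a translation, so no matrix algebra or determinant identities are required, and it makes maximally transparent the fact the main text relies on, namely that $Z_v$ depends only on $\sigma_{l,t}^2$ and $\sigma_{l,s}^2$ and not on $a_l,b_l,c_l$. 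What the paper's matrix formulation buys in exchange is reusability: the same objects $S_l$ and $A_l^{-1}\mu_l$ are immediately recycled to derive the update formula for the covariance $\Sigma_l$ via the Gaussian-mixture results of the cited reference, and the structure carries over to the rotation-invariant variant with $R_l$ in place of $A_l$, where the exponent no longer separates cleanly in $y_v$ then $x_v$ and your iterated integration would require completing the square.
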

\begin{proof}
The  result is  straightforward  after  basic calculus and taking into account that the  partition function of a multivariate normal distribution  with covariance matrix $\Sigma$ is:  $(2\pi)^{k/2}|\Sigma|^{1/2}$, where $k$ herein is the dimension of such multivariate normal distribution.

We recall that, in Eq.~\eqref{eq:Gauss1}, we defined $g_k$ as:
\begin{equation}\label{eq:gk}
g_k(h_v,e_v|\Theta_a) = -\frac{(y_v-a_lx_v-b_l)^2}{2\sigma^2_{l,t}}-\frac{(x_v-c_l)^2}{2\sigma^2_{l,s}}
\end{equation}

Which can be written, for  $h_v=l$, in matrix form:

\begin{equation}
g_k(l,e_v|\Theta_a) = -\frac{1}{2}(A_le_v-\mu_l)^t\Sigma_l^{-1}(A_le_v-\mu_l)
\end{equation}

\noindent where $A_l$, $\mu_l$ and $\Sigma_l$ are defined as above. The  partition function for $h_v=l$ is  computed  as:

\begin{equation}
Z_v(h_v=l,\Theta_a) = \int \exp\left\{  -\frac{1}{2}(A_le-\mu_l)^t\Sigma_l^{-1}(A_le-\mu_l) \right\}de
\end{equation}

A  simple change of coordinate, taking into account that $|A_l|=1$ for all $l$, and setting $S_l^{-1}=A_l^t\Sigma_l^{-1}A_l$ lead us to:

\begin{equation}
Z_v(h_v=l,\Theta_a) = \int \exp\left\{  -\frac{1}{2}(u-A_l^{-1}\mu_l)^tS_l^{-1}(u-A_l^{-1}\mu_l) \right\}du
\end{equation}

\noindent which is the  definition of a multivariate normal distribution of 2 dimensions. The result follows from the  properties of the  matrix determinant:
\begin{equation}
Z_v(h_v=l,\Theta_a) =  2\pi|S_l|^{1/2} = 2\pi|\Sigma_l|^{1/2}
\end{equation}
\end{proof}

The authors in~\cite{Bilmes1997} provide the derivation of the update parameter formulas for a gaussian mixture model. The update formulas introduced in this paper are essentially the same but adapted to regression lines and our model. Moreover, notice that Eq.~\eqref{eq:Q_v} are almost equal to Eq.~(7) in that work. It means that the  arguments introduced there also applies to our method.

More specifically, variances  $\sigma^2_{l,t}$ and $\sigma^2_{l,s}$ and  {\em mean} parameter $c_l$ comes straightforward from~\cite{Bilmes1997}. To illustrate it we show in what follows that $c_l$ is equal to $\mu^{new}_l$ in that  paper.  $c_l$ only appears in the second  term of $g_k$ in Eq.~\eqref{eq:gk}. The partial derivative with respect to $c_l$ is:

\begin{equation}
\frac{\partial}{\partial c_l}Q(\Theta|\Theta')=\sum_v \frac{(x_v-c_l)}{\sigma^2_{l,s}}p(h_v=l|e_v,\Theta_a',\Theta_b) = 0
\end{equation}

Which, after rearraging the terms, we  can isolate $c_l$ and find the update formula:
\begin{equation}
c_l = \frac{\sum_v x_vp(h_v=l|e_v,\Theta_a',\Theta_b)}{\sum_v p(h_v=l|e_v,\Theta_a',\Theta_b)}
\end{equation}

To find the update formulas for  $\sigma^2_{l,t}$ and $\sigma^2_{l,s}$, we have to apply  the results  from  matrix algreball recalled in~\cite{Bilmes1997} to  covariance matrix $S_l^{-1}=A_l^t\Sigma_l^{-1}A_l$ and {\em mean} vector $A_l^{-1}\mu_l$. Thus,  matrix $M_{l,1}$ there becomes $M_{l,v}$ and it is defined  as:
\begin{equation}
M_{l,v} =  \Sigma_l - (A_le_v-\mu_l)(A_le_v-\mu_l)^t
\end{equation}

which  leads to the update formula:
\begin{equation}\label{eq:Sigma}
\Sigma_l = \frac{\sum_v(A_le_v-\mu_l)(A_le_v-\mu_l)^tp(h_v=l|e_v,\Theta_a',\Theta_b)}{\sum_vp(h_v=l|e_v,\Theta_a',\Theta_b)}
\end{equation}

%In this section we derive the update expressions in Eq. (8) used later in the proposed scheme. These expressions are similar to the ones introduced in our previous work\cite{46}.

To conclude, it remains to find the update formulas for the regression line parameters $a_l$ and  $b_l$. One can find the derivation of such formulas in any  textbook. We follow the same ideas. We start by computing the independent term $b_l$ and then we will find the slope $a_l$. The partial derivative of $Q$ with respect to $b_l$ is:

\begin{equation}\label{eq:bl}
\frac{\partial}{\partial b_l}Q(\Theta|\Theta')=\sum_v \frac{(y_v-a_lx_v-b_l)}{\sigma^2_{l,t}}p(h_v=l|e_v,\Theta_a',\Theta_b) = 0
\end{equation}
 
We do exactly  the same that we did for $c_l$ and we find:
\begin{equation}
b_l = \frac{\sum_v (y_v-a_lx_v)p(h_v=l|e_v,\Theta_a',\Theta_b)}{\sum_v p(h_v=l|e_v,\Theta_a',\Theta_b)}
\end{equation}

\noindent Which can easily be computed if we know $a_l$. The partial derivative with respect to $a_l$ is:
\begin{equation}
\frac{\partial}{\partial a_l}Q(\Theta|\Theta')=\sum_v \frac{x_v(y_v-a_lx_v-b_l)}{\sigma^2_{l,t}}p(l|e_v,\Theta_a',\Theta_b) = 0
\end{equation}
We replace in the above expression  $b_l$ by its definition in Eq.~\eqref{eq:bl} to obtain:
\begin{align}\label{eq:a1}
%\sum_v x_v(y_v-a_lx_v)p(h_v=l|e_v,\Theta_a') - b_l\sum_v x_vp(h_v=l|e_v,\Theta_a',\Theta_b) =&\\
%\sum_v x_v(y_v-a_lx_v)p(h_v=l|e_v,\Theta_a') - \frac{\sum_v (y_v-a_lx_v)p(h_v=l|e_v,\Theta_a',\Theta_b)}{\sum_v p(h_v=l|e_v,\Theta_a,\Theta_b')}\sum_v x_vp(h_v=l|e_v,\Theta_a',\Theta_b) =&\\
\sum_v x_v(y_v-a_lx_v)p(l|e_v,\Theta_a',\Theta_b) - \sum_v (y_v-a_lx_v)p(l|e_v,\Theta_a',\Theta_b)\bar{x}=&0
\end{align}

\noindent where we define the {\em mean} of the horizontal coordinates $\bar{x}$ as:
\begin{equation}
\bar{x} = \frac{\sum_v x_vp(l|e_v,\Theta_a',\Theta_b)}{\sum_v p(l|e_v,\Theta_a',\Theta_b)}
\end{equation}

We rearrange Eq.~\eqref{eq:a1} and we find :
\begin{equation}
\sum_v (x_v-\bar{x})(y_v-a_lx_v)p(h_v=l|e_v,\Theta_a',\Theta_b) 
\end{equation}

The remainder is  straightforward  but taking into account that $\bar{y}$ is defined as $\bar{x}$:
\begin{equation}
\bar{y} = \frac{\sum_v y_vp(h_v=l|e_v,\Theta_a',\Theta_b)}{\sum_v p(h_v=l|e_v,\Theta_a',\Theta_b)}
\end{equation}

Therefore, $a_l$:

\begin{equation}
a_l = \frac{\sum_v (x_v-\bar{x})(y_v-\bar{y})p(h_v=l|e_v,\Theta_a',\Theta_b)}{\sum_v (x_v-\bar{x})^2p(h_v=l|e_v,\Theta_a',\Theta_b) }
\end{equation}

\subsection{Rotation invariant updates}

The proposed regression model proposed in Eq.~\eqref{eq:Gauss1} is not fully rotation invariant, since we apply a shear transform $A_l$ instead of a rotation transform. In this section we introduce a rotation invariant model along its corresponding update formula. As we will see it only changes the update formula for the slope parameter $a_l$ and independent term $c_l$ while the update formulas for the other parameters remains equal.

\newpage

We define the new feature function $g_k$ as follows:

\begin{equation}\label{eq:gk_invariant}
g_k(l,e_v|\Theta_a) = -\frac{(y_v-a_lx_v-b_l)^2}{2\sigma^2_{l,t}}-\frac{(x_v+a_ly_v-c_l)^2}{2\sigma^2_{l,s}}
\end{equation}

Which can be written, for  $h_v=l$, in matrix form:

\begin{equation}
g_k(l,e_v|\Theta_a) = -\frac{1}{2}(R_le_v-\mu_l)^t\Sigma_l^{-1}(R_le_v-\mu_l)
\end{equation}

\noindent where $R_l=\left(\begin{array}{cc}1 & a_l \\ -a_l & 1\end{array}\right)$; and  $\mu_l$ and $\Sigma_l$ are defined as above. Recall that $a_l$ is the slope of the regression line, $a_l=\frac{\sin \beta}{\cos \beta}$. It becomes clear that $R_l$ is a rotation matrix of $-\beta$ radians and the model given by Eq.~\eqref{eq:gk_invariant} is rotation invariant.

The update formula is obtained similarly than before but it appears a new term. Thus, the partial derivative with respect to $a_l$ is:

\begin{equation}
\frac{\partial}{\partial a_l}Q=\sum_v \left(\frac{x_v(y_v-a_lx_v-b_l)}{\sigma^2_{l,t}}- \frac{y_v(x_v+a_ly_v-c_l)}{\sigma^2_{l,s}}\right)p(l|e_v,\Theta_a',\Theta_b) = 0
\end{equation}

After some calculations the final update formula is:

\begin{equation}
a_l = 2\frac{\sum_v (x_v-\bar{x})(y_v-\bar{y})p(h_v=l|e_v,\Theta_a',\Theta_b)}{\sum_v \left((x_v-\bar{x})^2 + (y_v-\bar{y})^2\right)p(h_v=l|e_v,\Theta_a',\Theta_b)}
\end{equation}

\noindent and $c_l$ is:
\begin{equation}
c_l = \frac{\sum_v (x_v+a_ly_v)p(h_v=l|e_v,\Theta_a',\Theta_b)}{\sum_v p(h_v=l|e_v,\Theta_a',\Theta_b)}
\end{equation}

Covariance matrix $\Sigma_l$ is computed like in Eq.~\eqref{eq:Sigma} but replacing $A_l$ by $R_l$.

\newpage

\section{Derivation of Algorithm~\ref{alg:MP} }

The optimization problem formulated in Eq.~(9) results on a constrained minimization problem that can be solved by means of Lagrange multipliers as:

\begin{equation}\label{lagrangian}
\begin{split}
L(p_u, p_v, \Lambda, \Theta, N) = \\
\sum_u \sum_{h_u} p_u(h_u|\Theta_b) \log p_u(h_u|\Theta_b) + \\
+ \sum_v c_v\sum_{h_v} p_v(h_v|e_v,\Theta_a',\Theta_b) \log p_v(h_v|e_v,\Theta_a',\Theta_b) + \\ 
+ \sum_k \sum_{h_v} g_k(h_v,e_v|\Theta_a')p_v(h_v|e_v,\Theta_a',\Theta_b) + \\
+ \sum_k \theta_k \Bigg[ \sum_u f_k(h_u) p_u(h_u|\Theta_b) - \mu_{k}  \Bigg] + \\
+ \sum_v \sum_{u \supset v} \sum_{h_v} \lambda_{v \rightarrow u} (h_v) \Bigg[ \sum_{h_{u \setminus v }} p_u(h_u|\Theta_b) -p_v(h_v|e_v,\Theta_a',\Theta_b) \Bigg] +  \\
+  \sum_u \nu_u \Bigg[ \sum_{h_u} p_u(h_u|\Theta_b) - 1  \Bigg] +\sum_v \nu_v \Bigg[ \sum_{h_v} p_v(h_v|e_v,\Theta_a',\Theta_b) - 1  \Bigg] 
\end{split}
\end{equation}

The Lagrangian is convex for all positive $c_v$ over the sets of defined constraints~\cite{Heskes2006}. The computation of partial derivatives of $L$ with respect to $p_u(h_u|\Theta_b)$ and $p_v(h_v|e_v,\Theta_a',\Theta_b)$ lead us to the expressions:

\begin{equation}\label{logqalpha}
\begin{split}
\log p_u(h_u|\Theta_b) &= - \nu_u - 1 - \sum_k \theta_k f_k(h_u) - \sum_{v \subset u} \lambda_{v \rightarrow u} (h_v) \\
\log p_v(h_v|e_v,\Theta_a',\Theta_b) &= - \frac{\nu_v}{c_v}- 1 +  \frac{1}{c_v} \sum_k g_k(h_v,e_v|\Theta_a') + \frac{1}{c_v}\sum_{v \subset u} \lambda_{v \rightarrow u} (h_v) 
\end{split}
\end{equation}

\noindent \noindent where $\theta_k f_k(h_u)$ refers herein to the feature function $f_k(h_u|\Theta_b)$ defined in Eq.(12) of the main paper. 

Now, we know that $\sum_{h_u} p_u(h_u|\Theta_b) = 1$, and similarly $p_v(h_v|e_v,\Theta_a',\Theta_b)$, therefore, computing the exponential values on both sides of the equation, and summing for $\sum_{h_u}$ and $\sum_{h_v}$, we obtain the values for $\nu_v$ and $\nu_u$ as: %
\begin{equation}\label{nuAlpha}
\begin{split}
\nu_u &= - 1 + \log \sum_{h_u} \exp \left\lbrace - \sum_k \theta_k f_k (h_u)  - \sum_{v \subset u} \lambda_{v \rightarrow u} (h_v)    \right\rbrace \\
\nu_v &= - c_v + c_v\log \sum_{h_v} \exp \left\lbrace  \frac{1}{c_v}\sum_k g_k(h_v,e_v|\Theta_a') +  \sum_{v \subset u} \frac{1}{c_v} \lambda_{v \rightarrow u} (h_v)  \right\rbrace
\end{split}
\end{equation}

The above are the corresponding partition functions for approximate marginals $p_u(h_u|\Theta_b)$ and conditional marginals $p_v(h_v|e_v,\Theta_a',\Theta_b)$. Plugging the expressions for $p_u(h_u|\Theta_b)$ and  $p_v(h_v|e_v,\Theta_a',\Theta_b)$ on the primal problem we find the dual problem $L^*$:  %

\begin{equation}\label{Dual}
L^*(\Lambda,\theta_b) = -\sum_u \log Z_u(\Theta_b,\Lambda) - \sum_{v} c_v\log Z_v(e_v,\Theta'_a,\Theta_b,\Lambda) - \sum_k \theta_k \mu_k
\end{equation}

\noindent which is convex~\cite{Bertsekas2003}. 
Algorithm~\ref{alg:MP} is the numerical implementation of block gradient descend method applied to the dual problem.
We find the optimal prior probabilities $p_u(h_u|\Theta_b)$ and a posterior probabilities $p_v(h_v|e_v,\Theta'_a,\Theta_b)$ by finding the optimal parameters $\Lambda$ and $\Theta$ minimizing the dual problem $L^*$, which is convex on $\lambda_{v\to u}(h_v)\in \Lambda$ and $\theta_k\in \Theta_b$. The partial derivative with respect to $\theta_k$ is: %
\begin{equation}
\frac{\partial L^* }{\partial \theta_k} = \sum_{h_u} f_k(h_u)p_u(h_u|\Theta_b) - \mu_k = 0
\end{equation}

Observe that the gradient of the dual problem with respect to $\theta_k$ is 0 when {\em moment-matching} constraints are satisfied. Since the prior $p_u(h_u|\Theta_b)$ depends on parameter $\theta_k$ we apply line search strategy to find better updates of $\theta_k$. We fix the length $\eta$ of each step according to the Armijo conditions and the update step is:%
\begin{equation}
\theta_k \leftarrow \theta_k' + \eta \left(\sum_{h_u} f_k(h_u) p_u(h_u|\Theta_b') - \mu_{k} \right )
\end{equation}

$\Theta_b$ are shared by all $\Psi_u$ and  we can perform several iterations before starting sending messages. In practice, this does not improve accuracy neither speed up the convergence. Therefore, we update once between each message-passing process.

The message-passing process consist of computing partial derivative with respect $\lambda_{v\to u}(h_v)$, being fixed model parameters $\Theta_b$ and $\Theta'_a$ found in the previous gradient descend and EM iteration, respectively. This block-gradient descend strategy has been successfully applied before in many other numerical schemes such as ~\cite{Heskes2006,Schwing2011}. Algorithm~\ref{alg:MP} follows the same ideas appearing in those papers. Partial derivative with respect to $\lambda_{v\to }(h_v)$ lead to the {\em sum-marginalization} constraint: 
\begin{equation}
\frac{\partial L^* }{\partial \lambda_{v \rightarrow u}(h_v)} = \sum_{h_{u \setminus v }} p_u( h_u|\Theta_b) - p_v(h_v|e_v,\Theta_a',\Theta_b) = 0
\end{equation}

For each hidden variable $h_v$, we fix $\Theta_b$, $\Theta'_a$ is always fixed in this algorithm, consider the partial derivative with respect to $\lambda_{v\to u }(h_v)$  and let us say that $\lambda_{v \to u }^{(new)}(h_v)$ is the value where the gradient  is 0. Then, from the {\em sum-marginalization consistency} constraint, we have: %
\begin{equation}\begin{split}
\frac{\partial L^*}{\partial \lambda_{v\to u }(h_v)  } &= \sum_{h_{u\setminus v}} p_{u}(h_u|\Theta_b) -p_{v}(h_v|e_v,\Theta_a',\Theta_b) =0\\
p_{v}(h_v|e_v,\Theta_a',\Theta_b) &= \frac{e^{\lambda_{v\to u }^{(new)}(h_v)}}{e^{\lambda_{v \to u}(h_v) } } p_{u}(h_v|\Theta_b)\\
\log p_{v}(h_v| e_v,\Theta_a',\Theta_b) &= \lambda_{v \to u }^{(new)}(h_v) - \lambda_{v \to u  }(h_v) + \\
&+\log p_{u}(h_v|\Theta_b) \\
\lambda_{v \to u  }^{(new)}(h_v) &= \lambda_{v \to u}(h_v) + \log p_{v}(h_v|e_v,\Theta_a',\Theta_b)  -\\
&- \log p_{u}(h_v|\Theta_b) \end{split}\label{eq:lambda_new}
\end{equation}

\noindent where we express the update message $\lambda_{v \to u }^{(new)}(h_v)$ in terms of the old messages. To estimate $p_{v}(h_v|e_v,\Theta'_a,\Theta_b)$ we add the last row of~\eqref{eq:lambda_new} over all  the pairs $u$ of hidden variables including $h_{v}$:%
\begin{equation}\begin{split}
\sum_{u \supset v }&\lambda_{v \to u}^{(new)}(h_v) = \sum_{u \supset v}\lambda_{v \to u }(h_v) + \\ 
+&\sum_{u\supset v}\log p_{v}(h_v|e_v,\Theta_a',\Theta_b) - \sum_{u\supset v}\log p_{u}(h_v|\Theta_b)
\end{split}
\end{equation}

Then, defining  $A_v$ as the number of pairs $u$ containing $v$ and rearranging the terms: %
\begin{equation}
\begin{split}
A_{v}\log p_{v}(h_v|e_v,\Theta'_a,\Theta_b) &= \sum_{u \supset v}\lambda_{v \to u }^{(new)}(h_v)  + \sum_{ u \supset v}[\log p_{u}(h_v|\Theta_b) - \lambda_{v \to u }(h_v)]\\
 c_{v}\log p_{v}(h_v|e_v,\Theta_a',\Theta_b) &=\nu_{v}-c_{v}  - \sum_k g_{k}(h_v,e_v|\Theta_a')  -\sum_{u \supset v} \lambda_{v\to u}^{(new)}(h_v)
\end{split}
\end{equation}

\noindent and adding both equations in both sides, we obtain the update for \\
$\log p_{v}(h_v|e_v,\Theta_a',\Theta_b) $: %
\begin{equation} \begin{split}
\log &\,p_{v}(h_v|e_v,\Theta_a',\Theta_b) = \frac{1}{c_{v}+A_{v}}\log \psi_{v}(h_v|e_v,\Theta'_a) + \\
+& \frac{1}{c_{v}+A_{v}}\sum_{u \supset v}\left[\log p_{u}(h_v|\Theta_b) - \lambda_{v \to u }(h_v)\right] 
\end{split}
\end{equation}

\noindent where $\log \psi_{v}(h_v|e_v,\Theta'_a) = \frac{\nu_{v}}{c_{v}} -1 -\frac{1}{c_{v}}\sum_k g_k(h_v,e_v|\Theta'_a)  $ and%
\begin{equation}
p_{v}(h_v|e_v,\Theta_a',\Theta_b) = \left(\psi_{v}(h_v|e_v)^{c_{v}}\prod_{u \supset v }\frac{p_{u}(h_v|\Theta_b)}{e^{\lambda_{v\to u }(h_v)}}\right )^{\frac{1}{c_{v}+A_{v}}}
\end{equation}

We can now to compute the update $\lambda_{v \to u }^{(new)}(h_v) $:%
\begin{equation}\begin{split}
\lambda_{v \to u }^{(new)}(h_v)&= \lambda_{v \to u }(h_v ) + \log p_{v}(h_v|e_v,\Theta_a',\Theta_b) - \\
&- \log p_{u}(h_v|\Theta_b)
\end{split}
\end{equation}

The above formulas are more compactly expressed if we define message functions by: $m_{v   \to u}(h_v)=e^{\lambda_{v \to u}(h_v)}$ and then, the update rules are:

\begin{equation}\begin{split}
m_{v \from u }(h_v) &=\frac{p_{u}(h_v|\Theta_b)}{m_{v \to u }(h_v)} \\
m_{v \to u}(h_v) & = \frac{p_{v}(h_v|e_v,\Theta_a',\Theta_b)}{m_{v \from u }(h_v)}
\end{split}
\end{equation}

In summary, we have the update expression to update the parameters of the feature functions, and the expression for the messages sent between the $u$ and $v$. These update formulas arranged as shown in Algorithm~\ref{alg:MP} provide a block gradient descend method that can be parallelized. At each iteration we find new updates of the prior model parameters $\Theta_b$, then we send  messages, first from hidden variables $h_v$ to pairs of hidden variables $u,$ and then we combine them obtaining the new messages to update the posteriori probability $p_v(h_v|e_v,\Theta'_a,\Theta_b)$ for the next EM iteration of the main algorithm.

\begin{algorithm}
\KwData{ $\{ \mu_k\}$: empirical moments.}
\KwResult{$\Theta_b$, $\Lambda$: model parameters, $\{p_v(h_v|e_v,\Theta_a',\Theta_b), p_u(h_u|\Theta_b)\}$ marginals.}
Initialize: $\theta_k= 0$, $\theta_k\in\Theta_b$, $m_{v \to u}(h_v)=1$;\\
\While{ not converged}
{

\For{ $\forall k \in \{I_u\}$ }{
$$\theta_k \leftarrow \theta_k' + \eta \left(\sum_{h_u} f_k(h_u) p_u(h_u|\Theta_b') - \mu_{k} \right )$$
}

\For{ $\forall v$  }
{
\For{ $\forall u \supset v$  }
{
$$ p_u(h_v|\Theta_b)=\sum_{h_{u \setminus v}}  p_u(h_u|\Theta_b) $$
$$m_{v \from u}(h_u)=\frac{p_u(h_v|\Theta_b)}{m_{v \to u}(h_v)}$$
}

$$p_v(h_v|e_v,\Theta_a',\Theta_b) = \frac{1}{Z_v} \left(e^{\sum_k g_k(h_v,e_v|\Theta_a') }\prod_{u \supset v} m_{v \from u}(h_u) \right )^{\!\!\frac{1}{c_v+A_v}}$$

\For{ $\forall u \supset v$  }
{
$$m_{v \to u}(h_v)   =  \frac{p_v(h_v|e_v,\Theta_a',\Theta_b)}{m_{v \from u}(h_u)}$$

$$p_u(h_u|\Theta_b) = \frac{1}{Z_u}\left(e^{ -\sum_k \theta_kf_k(h_u) }\prod_{v \subset u} m_{v \to u}(h_v)\right)^{\!\!\frac{1}{c_v}}$$
}
}
}
\caption{Message passing algorithm for constrained minimization of free energies. $Z_{u}$ and $Z_{v}$ are the partition function and $\eta$ is a step length satisfying the Armijo condition.}
\label{alg:MP}
\end{algorithm}

%% else use the following coding to input the bibitems directly in the
%% TeX file.

%%\begin{thebibliography}{00}
%% \bibitem{label}
%% Text of bibliographic item

%%%\bibitem{}

%%\end{thebibliography}
\end{document}